\tikzstyle{every pin edge}=[<-,shorten <=1pt]
\tikzstyle{neuron}=[circle,draw=black, thick, minimum size=17pt,inner sep=0pt]
\tikzstyle{input neuron}=[neuron]
\tikzstyle{output neuron}=[neuron]
\tikzstyle{hidden neuron}=[neuron]
\tikzstyle{annot} = [text width=4em, text centered]
\newcommand{\xiaowei}[1]{{\color{blue}#1}}
\newcommand{\sat}{\texttt{SAT}}
\newcommand{\unsat}{\texttt{UNSAT}}
\newcommand{\cstVarSet}{\mathbf{C}}
\newcommand{\symVarSet}{\mathbf{S}}
\newcommand{\absEle}{\mathbf{n}^{\#}}
\newcommand{\absDom}{\mathbf{N}^{\#}}
\newcommand{\symM}{\xi}
\newcommand{\absSem}[1]{[\kern-0.15em[#1]\kern-0.15em]^{\sharp}}
\newcommand{\algJoin}{\mathbf{join}}
\newcommand{\mlexpr}{\mathbf{expr}}
\newcommand{\air}{AI\(^2\)-r}
\title{Analyzing Deep Neural Networks with Symbolic Propagation: Towards Higher Precision and Faster Verification}
\author{
    Jianlin Li\inst{1,2} 
    \and 
    Jiangchao Liu\inst{3} 
    \and 
    Pengfei Yang\inst{1,2} 
    \and \\
    Liqian Chen(\Letter)\inst{3} 
    \and 
    Xiaowei Huang\inst{4,5} 
    \and 
    Lijun Zhang\inst{1,2,5}
}
\institute{
State Key Laboratory of Computer Science, Institute of Software, Chinese Academy of Sciences, Beijing, China
\and
University of Chinese Academy of Sciences, Beijing, China
\and
National University of Defense Technology, Changsha, China
\and
University of Liverpool, Liverpool, UK
\and
Institute of Intelligent Software, Guangzhou, China
\\ \email{lqchen@nudt.edu.cn}
}
\newtheorem{propn}{Proposition}
\newcommand{\commentout}[1]{}
\begin{document}

\maketitle
\begin{abstract}
Deep neural networks (DNNs) have been shown lack of robustness, as they are vulnerable to small perturbations on the inputs,
which has led to safety concerns on applying DNNs to safety-critical domains.
Several verification approaches have been developed to automatically prove or disprove  safety properties for DNNs.
However, these approaches suffer from either the scalability problem, i.e., only small DNNs can be handled, or the precision problem, i.e., the obtained bounds are loose.
This paper improves on a recent proposal of analyzing DNNs through the classic abstract interpretation technique, by a novel symbolic propagation technique.
More specifically, the activation values of neurons are represented \emph{symbolically} and propagated forwardly from the input layer to the output layer, on top of abstract domains.
We show that our approach can achieve significantly higher precision and thus can prove more properties than using only abstract domains.
Moreover, we show that the bounds derived from our approach on the hidden neurons, when applied to a state-of-the-art SMT based verification tool, can improve its performance.
We implement our approach into a software tool and validate it over a few DNNs trained on benchmark datasets such as MNIST, etc.
\end{abstract}

\section{Introduction}

During the last few years, deep neural networks (DNNs) have been broadly applied in various domains including nature language processing \cite{DBLP:journals/spm/X12a}, image classification \cite{DBLP:conf/nips/KrizhevskySH12}, game playing \cite{alphago}, etc.
The performance of these DNNs, when measured with the prediction precision over a test dataset, is comparable to, or even better than, that of manually crafted software.
However, for safety-critical applications, it is required that the DNNs are certified against properties related to its safety.
Unfortunately, DNNs have been found lack of robustness. Specifically, \cite{SZSBEGF2014} discovers that it is possible to add a small, or even impcerceptible, perturbation to a correctly classified input and make it misclassified. Such adversarial examples have raised serious concerns on the safety of DNNs. Consider a self-driving system controlled by a DNN.
A failure on the recognization of a traffic light may lead to a serious consequence because human lives are at stake.

Algorithms used to find adversarial examples are based on either gradient descent, see e.g., \cite{SZSBEGF2014,CW-Attacks}, or saliency maps, see e.g., \cite{JSMA}, or evolutionary algorithm, see e.g., \cite{AJJ2014}, etc. Roughly speaking, these are heuristic search algorithms without the guarantees to find the optimal values, i.e., the bound on the gap between an obtained value and its ground truth is unknown. However, the certification of a DNN needs 
provable guarantees.
Thus, techniques based on formal verification have been developed. Up to now, DNN verification includes constraint-solving \cite{DBLP:conf/cav/PulinaT10,DBLP:conf/cav/KatzBDJK17,LM2017,DBLP:conf/atva/Ehlers17,NKPSW2017,DBLP:conf/icml/WongK18,DBLP:journals/corr/abs-1803-06567}, layer-by-layer exhaustive search \cite{DBLP:conf/cav/HuangKWW17,WHK2017,DBLP:conf/icml/WengZCSHDBD18}, global optimization \cite{DBLP:conf/ijcai/RuanHK18}, abstract interpretation \cite{AI2,deeppoly,deepz}, etc. 
Abstract interpretation is a theory in static analysis which verifies a program by using sound approximation of its semantics \cite{CC1977}. Its basic idea is to use an abstract domain to over-approximate the computation on inputs. In \cite{AI2}, this idea has first been developed for verifying DNNs. However, abstract interpretation can be imprecise, due to the non-linearity in DNNs. \cite{deepz} implements a faster Zonotope domain for DNN verification. \cite{deeppoly} puts forward a new abstract domain specially for DNN verification and it is more efficient and precise than Zonotope.
 
The first contribution of this paper is to propose a novel symbolic propagation technique to enhance the precision of abstract interpretation based DNN verification.
For every neuron, we \emph{symbolically} represent, with an expression, how its activation value can be determined by the activation values of  neurons in previous layers. 
By both illustrative examples and experimental results, we show that, comparing with using only abstract domains, our new approach can find significantly tighter constraints over the neurons' activation values. Because abstract interpretation is a sound approximation, with tighter constraints, we may prove properties that cannot be proven by using only abstract domains. For example, we may prove a greater lower bound on the robustness of the DNNs.

Another contribution of this paper is to apply the value bounds derived from our approach on hidden neurons to improve the performance of a state-of-the-art SMT based DNN verifier Reluplex \cite{DBLP:conf/cav/KatzBDJK17}.

Finally, we implement our approach into a software tool and validate it with a few DNNs trained on benchmark datasets such as MNIST, etc. 

\section{Preliminaries}

We recall some basic notions on deep neural networks and abstract interpretation.
For a vector $\bar x \in \mathbb{R}^n$, we use $x_i$ to denote its $i$-th entry.
For a matrix $W \in \mathbb R ^ {m \times n}$, $W_{i,j}$ denotes the entry in its $i$-th row and $j$-th column.
 
\subsection{Deep neural networks}
\begin{figure}[!t]
\begin{center}
\scalebox{0.6}{
\def\layersep{2.5cm}
\def\vertSepFactory{0.7}
\begin{tikzpicture}[shorten >=1pt,->,draw=black!50, node distance=\layersep]
    \foreach \name / \y in {1,...,5}
        \node[
            input neuron, 
            pin=left:
            \ifthenelse{\y=1 \OR \y=2}{
                $\bar x_{\y}$
            }{
                \ifthenelse{\y=5}{
                    $\bar x_m$
                }{
                    $\cdots$
                }
            }
        ] (I-\name) at (0,-\vertSepFactory * \y) {};
        
    \foreach \name / \y in {1,...,9}
        \path[yshift=1.4cm]
            node[hidden neuron] (H1-\name) at (1*\layersep,-\vertSepFactory * \y cm) {};

    \foreach \name / \y in {1,...,9}
        \path[yshift=1.4cm]
            node[hidden neuron] (H2-\name) at (2*\layersep,-\vertSepFactory * \y cm) {};

    \foreach \name / \y in {1,...,9}
        \path[yshift=1.4cm]
            node[hidden neuron] (H3-\name) at (3*\layersep,-\vertSepFactory * \y cm) {};

    \foreach \name / \y in {1,...,9}
        \path[yshift=1.4cm]
            node[hidden neuron] (H4-\name) at (4*\layersep,-\vertSepFactory * \y cm) {};

    \foreach \name / \y in {1,...,5}
        \node[
            output neuron,
            pin={
                [pin edge={->}]
                right:
            \ifthenelse{\y=1 \OR \y=2}{
                $\bar y_{\y}$
            }{
                \ifthenelse{\y=5}{
                    $\bar y_n$
                }{
                    $\cdots$
                }
            }
            }
        ]
        (O-\name) at (5*\layersep, -\vertSepFactory * \y cm) {};

    \foreach \source in {1,...,5}
        \foreach \dest in {1,...,9}
            \path (I-\source) edge (H1-\dest);

    \foreach \source in {1,...,9}
        \foreach \dest in {1,...,9}
            \path (H1-\source) edge (H2-\dest);

    \foreach \source in {1,...,9}
        \foreach \dest in {1,...,9}
            \path (H2-\source) edge (H3-\dest);

    \foreach \source in {1,...,9}
        \foreach \dest in {1,...,9}
            \path (H3-\source) edge (H4-\dest);

    \foreach \source in {1,...,9}
        \foreach \dest in {1,...,5}
            \path (H4-\source) edge (O-\dest);

    \node[annot,above of=H2-1, node distance=1cm, xshift=0.5*\layersep] (hl) {Hidden layer};
    \node[annot,above of=I-1] {Input layer};
    \node[annot,above of=O-1] {Output layer};
\end{tikzpicture}
}
\caption{A fully connected network: Each layer performs the composition of an affine transformation $\mathrm{Affine}(\bar x;W,b)$ and the activated function, where on edges between neurons the coefficients of the matrix $W$ are recorded accordingly.
}
\label{fig:fullyConnectedNetwork}
\end{center}
\end{figure}
We work with deep feedforward neural networks, or DNNs, which can be represented as a function $f:\mathbb{R}^m \to \mathbb{R}^n$, mapping an input $\bar x \in \mathbb{R}^m$ to its corresponding 
output $\bar y =  f(\bar x)   \in \mathbb{R}^n$.
A DNN has in its structure a sequence of layers, including an input layer at the beginning, followed by several hidden layers, and an output layer in the end.
Basically the output of a layer is the input of the next layer.
To unify the representation, we denote the activation values at each layer
as a vector. Thus the transformation between layers can also be seen as
a function in $\mathbb{R}^{m'} \to \mathbb{R}^{n'}$.
The DNN $f$ is the composition of the transformations between layers,
which is typically composed of an affine transformation followed by a non-linear activation function. In this paper we only consider one of the  most commonly used activation function -- the rectified linear unit (ReLU) activation function, defined as 
$$
\mathrm{ReLU}(x)=\max(x,0)
$$
for $x \in \mathbb{R}$ and $\mathrm{ReLU}(\bar x)=(\mathrm{ReLU}(x_1),\ldots,\mathrm{ReLU}(x_n))$ for $\bar x \in \mathbb{R}^n$.

Typically an affine transformation is of the form $\mathrm{Affine}(\bar x;W,b)=W\bar x+b:\mathbb R^m \to \mathbb R^n$, where $W \in \mathbb R ^{n \times m} $ and $b \in \mathbb R^n$. 
Mostly in DNNs we use a \textbf{fully connected layer} to describe the composition of an affine transformation $\mathrm{Affine}(\bar x;W,b)$ and the activation function, if the coefficient matrix $W$ is not sparse and does not have shared parameters. 
We call a DNN with only fully connected layers a fully connected neural network (FNN). Fig.~\ref{fig:fullyConnectedNetwork} gives an intuitive description of fully connected layers and fully connected networks. Apart from fully connected layers, we also have affine transformations whose coefficient matrix is sparse and has many shared parameters, like \textbf{convolutional layers}. Readers can refer to e.g. \cite{AI2} for its formal definition. In our paper, we do not special deal with convolutional layers, because they can be regarded as common affine transformations. In the architecture of DNNs, a convolutional layer is often followed by a non-linear \textbf{max pooling layer}, which takes as an input a three dimensional vector $\bar x \in \mathbb R ^{m \times n \times r}$
with two parameters $p$ and $q$ which divides $m$ and $n$ respectively, defined as 
\begin{align*}
\mathrm{MaxPool_{p,q}}(\bar x)_{i,j,k}=\max\{x_{i',j',k} \mid i' \in(p\cdot (i-1), p \cdot i]\;\wedge\; j' \in (q\cdot (i-1), q\cdot i]\}.
\end{align*}
We call a DNN with only fully connected, convolutional, and max pooling layers a convolutional neural network (CNN).

In the following of the paper, we let the DNN $f$ have $N$ layers, each of which has $m_k$ neurons,  for $0\leq k< N$. Therefore, $m_0=m$ and $m_{N-1}=n$.

\subsection{Abstract interpretation}
Abstract interpretation is a theory in static analysis which verifies a program by using sound approximation of its semantics \cite{CC1977}. Its basic idea is to use an abstract domain to over-approximate the computation on inputs and propagate it through the program.
In the following, we describe its adaptation to work with DNNs.

Generally, on the input layer, we have a concrete domain ${\cal C}$, which includes a set of inputs $X$ as one of its elements. To enable an efficient computation,
we choose an abstract domain ${\cal A}$ to infer the relation of variables in ${\cal C}$. We assume that there is a partial order $\le$ on $\mathcal C$ as well as $\mathcal A$, which in our settings is the subset relation $\subseteq$.
\begin{definition}
A pair of functions $\alpha:{\mathcal C} \to {\mathcal A}$ and $\gamma:{\mathcal A} \to {\mathcal C}$ is a Galois connection, if for any $a \in {\mathcal A}$ and $c \in {\mathcal C}$, we have $\alpha(c) \le a \Leftrightarrow c \le \gamma(a)$.
\end{definition}
Intuitively, a Galois connection $(\alpha,\gamma)$ expresses abstraction and concretization relations between domains, respectively.
Note that, $a \in \mathcal A$ is a sound abstraction of $c \in \mathcal C$ if and only if $c \le \gamma(a)$.

In abstract interpretation, it is important to choose a suitable abstract domain because it determines the efficiency and precision of the abstract interpretation.
In practice, we use a certain type of constraints to represent the abstract elements.
Geometrically, a certain type of constraints correspond to a special shape. E.g., the conjunction of a set of arbitrary linear constraints correspond to a polyhedron. 
Abstract domains that fit for verifying DNN include Intervals, Zonotopes, and Polyhedra, etc.

\begin{itemize}
\item
\textbf{Box.} A box $B$ contains bound constraints in the form of $a \le x_i \le b$. The conjunction of bound constraints express a box in the Euclid space. The form of the constraint for each dimension is an interval, and thus it is also named the Interval abstract domain.
\item
\textbf{Zonotope.} A zonotope $Z$ consists of constraints in the form of $z_i=a_i+\sum_{j=1}^m b_{ij} \epsilon_j$, where $a_i,b_{ij}$ are real constants and $\epsilon_j$ is bounded by a constant interval $[l_j,u_j]$. The conjunction of these constraints express a center-symmetric polyhedra in the Euclid space.
\item
\textbf{Polyhedra.} A Polyhedron $P$ has constraints in the form of linear inequalities, i.e. $\sum_{i=1}^n a_ix_i +  b\le 0$ and it gives a closed convex polyhedron in the Euclid space.
\end{itemize}

The following example shows intuitively how these three abstract domains work:
\begin{example}\label{example:domains}
Let $\bar x \in \mathbb R^2$, and the range of $\bar x$ be $X=\{(1,0),(0,2),(1,2),(2,1)\}$.
With Box, we can abstract the inputs $X$ as $[0,2]\times[0,2]$, and with Zonotope, $X$ can be abstracted as 
$
\left\{x_1=1-\frac 12 \epsilon_1  - \frac 12 \epsilon_3, \enspace x_2=1+ \frac 12 \epsilon_1+\frac 12 \epsilon_2\right\}.
$
where $\epsilon_1,\epsilon_2,\epsilon_3 \in [-1,1]$. 
With Polyhedra, X can be abstracted as
$
\{x_2 \le 2, \enspace x_2 \le -x_1+3, \enspace x_2 \ge x_1-1, \enspace x_2 \ge -2x_1+2\}.
$
Fig.~\ref{fig:aiexample} (left) gives an intuitive description for the three abstractions.
\end{example}

\begin{figure}
\centering
\includegraphics[scale=0.5,trim=30 100 10 100,clip]{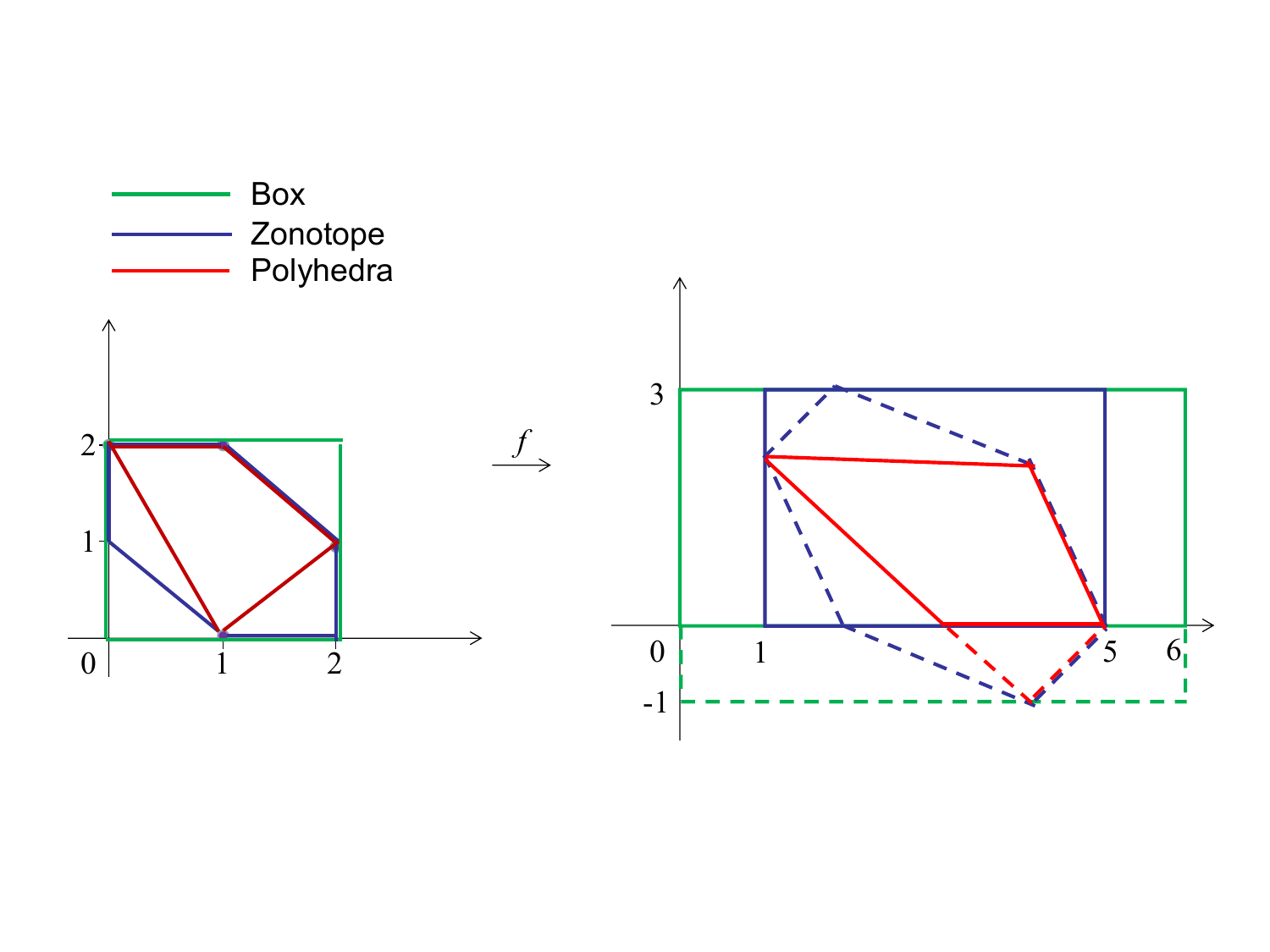} 
\caption{An illustration of Example~\ref{example:domains} and Example~\ref{example:AI}, where on the right the dashed lines gives the abstraction region before the $\mathrm{ReLU}$ operation and the full lines gives the final abstraction $f^\sharp(X^\sharp)$.}
\label{fig:aiexample}
\end{figure}

\section{Symbolic Propagation for Abstract Interpretation based DNN Verification}

In this section, we first describe how to use abstract interpretation to verify DNNs.
Then we present a symbolic propagation method to enhance its precision.

\subsection{Abstract~interpretation~based~DNN~verification}
\label{s:absdnn}

\subsubsection{The DNN verification problem}
The problem of verifying DNNs over a property can be stated formally as follows. 

\begin{definition}
Given a function $f:\mathbb R^m \to \mathbb R^n$ which expresses a DNN, a set of the inputs $X_0 \subseteq \mathbb R^m$, and a property $C \subseteq \mathbb R^n$, verifying
the property is to determine whether $f(X_0) \subseteq C$ holds, where $f(X_0) = \{f(\bar x)~|~\bar x\in X_0\}$.
\end{definition}

Local robustness property can be obtained by letting
\(X_0\) be a robustness region and $C$ be $C_l := \{\bar y \in \mathbb R^n \mid \arg \max_{1 \le i \le n} y_i=l\}$.
where $y$ denotes an output vector and $l$ denotes a label.

A common way of defining robustness region is with norm distance.
We use $\|\bar x-\bar x_0\|_p $ with $p \in [1,\infty]$ to denote the $L_p$ norm distance between two vectors $\bar x$ and $\bar x_0$. In this paper, we use $L_\infty$ norm defined as follows. 
\begin{align*}
\|\bar x\|_\infty=\max_{1 \le i \le n} |x_i|.
\end{align*}
Given an input $\bar x_0 \in \mathbb R^m$ and a perturbation tolerance $\delta>0$, a local robustness region $X_0$ 
can be defined as $B(\bar x_0,\delta):=\{\bar x \mid \|\bar x-\bar x_0\|_p \le \delta\}$.


\commentout{
Considering the target property, we formalize the robustness property of DNNs as an example.
For $\bar x \in \mathbb{R}^n$ and $1 \le p < \infty$, we define its $p$-norm to be
\begin{align*}
\|\bar x\|_p=\left(\sum_{i=1}^n |x_i|^p \right)^{\frac 1p},
\end{align*}
and its $\infty$-norm
\begin{align*}
\|\bar x\|_\infty=\max_{1 \le i \le n} |x_i|.
\end{align*}

}

\commentout{

Considering that there exist several types of DNNs (such as FNNs, CNNs, etc.), we may need functions of different expressiveness to describe different types of DNNs.
However, from the point of view of verification, we can transform a CNN into an equivalent FNN.
First, it is not hard to see that a convolutional layer can be expressed
as a fully connected layer, since a convolutional layer performs a linear transformation followed by the ReLU activation function, which is in the form of a fully connected layer. For a max pooling layer, we can turn it
into a composition of several fully connected layers. This results from the fact that the maximum of two reals can be expressed as two fully connected layers. Fig.~\ref{fig:maximum2FC} shows how to express $\max(x,y)$ in two fully connected layers with
\begin{align*}
&{\enspace\enspace}\max(x,y)=\frac {x+y}{2} + |\frac {x-y}{2}| \\
&=\mathrm{ReLU}(\frac{x+y}{2})-\mathrm{ReLU}(-\frac{x+y}{2})\\
&{\enspace\enspace}+\mathrm{ReLU}(\frac {x-y}{2})+\mathrm{ReLU}(\frac {y-x}{2}).
\end{align*}
Hence, in the following, we only describe how to verify FNNs.

\begin{figure}
  \centering

  \begin{tikzpicture}[node distance=1.7cm,semithick,scale=1, every node/.style={scale=1}]
    \tikzstyle{blackdot}=[circle,fill=black,minimum size=6pt,inner sep=0pt]
    \tikzstyle{state}=[minimum size=0pt,circle,draw,thick]
    \tikzstyle{stateNframe}=[minimum size=0pt]  

    \node[state](x){$x$};
    \node[state](y)[below of=x,yshift=0.5cm]{$y$};  
    \node[state](s1)[right of=x,xshift=1.5cm]{};
   \node[state](s0)[above of=s1]{}; 
   \node[state](s2)[right of=y,xshift=1.5cm]{};
   \node[state](s3)[below of=s2]{};
   \node[state](t)[below right of=s1,xshift=1.5cm,yshift=0.3cm]{};

    \path
            
              (x) edge                              node[above,xshift=0.6cm] {$\frac 12$} (s0)
              (x) edge                              node[above,xshift=-0.6cm,yshift=-0.2cm] {$-\frac 12$} (s1)
            (x) edge                                node[above,xshift=0.6cm,yshift=-0.5cm] {$\frac 12$} (s2)
            (x) edge                                node[above,xshift=0.8cm,yshift=-0.95cm] {$-\frac 12$} (s3)
            (y) edge                                node[above,xshift=0.18cm] {$\frac 12$} (s0)
              (y) edge                              node[above,xshift=0.18cm] {$-\frac 12$} (s1)
            (y) edge                                node[above,xshift=0.6cm,yshift=-0.25cm] {$-\frac 12$} (s2)
            (y) edge                                node[above,xshift=-0.2cm,yshift=-0.25cm] {$\frac 12$} (s3)
            (s0) edge                               node[above] {$1$} (t)
              (s1) edge                             node[above] {$-1$} (t)
            (s2) edge                               node[above] {$1$} (t)
            (s3) edge                               node[above] {$1$} (t)
            ;
  \end{tikzpicture}
\caption{\label{fig:maximum2FC}The FNN expressing $\max(x,y)$ \xiaowei{please make this figure smaller pf: I will try }}
\end{figure}

}

\subsubsection{Verifying DNNs via abstract interpretation}
Under the framework of abstract interpretation, to conduct verification of DNNs, we first need to choose an abstract domain ${\cal A}$.
Then we represent the set of inputs of a DNN as an abstract element (value) $X^\sharp_0$ in  ${\cal A}$.
After that, we pass it through the DNN layers by applying abstract transformers of the abstract domain. Recall that $N$ is the number of layers in a DNN and ${m_k}$ is the number of nodes in the $k$-th layer. Let $f_k$  (where $1\leq k < N$) be the layer function mapping from ${\mathbb R}^{m_{k-1}}$ to ${\mathbb R}^{m_k}$. We can lift $f_k$ to $T_{f_k}:{\cal P}({\mathbb R}^{m_{k-1}}) \rightarrow {\cal P}({\mathbb R}^{m_k})$ such that $T_{f_k}(X)=\{f_k(\bar x) \mid \bar x \in X\}$.

\begin{definition}
An abstract transformer $T_{f_k}^{\sharp}$ is a function mapping an abstract element $X^\sharp_{k-1}$ in the abstract domain ${\cal A}$  to another abstract element ${X^\sharp}_{k}$. Moreover, $T_{f_k}^{\sharp}$  is sound if $T_{f_k} \circ \gamma \subseteq \gamma \circ T_{f_k}^\sharp$.
\end{definition}

Intuitively, a sound abstract transformer $T_{f_k}^{\sharp}$ maintains a sound relation between the  abstract post-state and the abstract pre-state of a transformer in DNN (such as linear transformation, ReLU operation, etc.).

Let $X_k = f_k(...(f_1(X_0)))$ be the exact set of resulting vectors in $\mathbb{R}^{m_k}$ (i.e., the $k$-th layer) computed over the concrete inputs $X_0$, and $X{^\sharp}_k = T_{f_k}{^\sharp}(...(T_{f_1}{^\sharp}(X{^\sharp}_0)))$ be the corresponding abstract value of the $k$-th layer when using an abstract domain ${\cal A}$. Note that $X_0 \subseteq \gamma({X^\sharp}_{0})$. We have the following conclusion.

\begin{propn}
If $X_{k-1} \subseteq \gamma({X^\sharp}_{k-1})$, then we have $X_{k} \subseteq \gamma({X^\sharp}_{k}) = \gamma \circ T_{f_k}^{\sharp}({{X^\sharp}}_{k-1})$.
\end{propn}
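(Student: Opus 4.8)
The plan is to reduce the statement to a single application of the soundness condition $T_{f_k} \circ \gamma \subseteq \gamma \circ T_{f_k}^\sharp$, after first transporting the hypothesis through the concrete transformer. First I would record the two definitional identities $X_k = T_{f_k}(X_{k-1})$ and $X^\sharp_k = T_{f_k}^\sharp(X^\sharp_{k-1})$, both of which come straight from the recursive definitions of $X_k$ and $X^\sharp_k$ given just before the proposition. The claimed equality $\gamma(X^\sharp_k) = \gamma \circ T_{f_k}^\sharp(X^\sharp_{k-1})$ is then immediate from the second identity and requires no further argument.

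The central step is the following chain. Starting from the hypothesis $X_{k-1} \subseteq \gamma(X^\sharp_{k-1})$, I would apply $T_{f_k}$ to both sides. This relies on the monotonicity of $T_{f_k}$ with respect to $\subseteq$, which is not stated as a separate lemma but follows directly from the definition $T_{f_k}(X) = \{f_k(\bar x) \mid \bar x \in X\}$: enlarging the argument set can only enlarge the image set. Hence $T_{f_k}(X_{k-1}) \subseteq T_{f_k}(\gamma(X^\sharp_{k-1}))$. Soundness of the abstract transformer then gives $T_{f_k}(\gamma(X^\sharp_{k-1})) \subseteq \gamma(T_{f_k}^\sharp(X^\sharp_{k-1}))$. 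Composing the two inclusions yields $X_k = T_{f_k}(X_{k-1}) \subseteq \gamma(T_{f_k}^\sharp(X^\sharp_{k-1})) = \gamma(X^\sharp_k)$, which is exactly the desired conclusion.

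Since $\subseteq$ is the partial order $\le$ on both $\mathcal{C}$ and $\mathcal{A}$, no translation between orders is needed, and the Galois connection itself is never invoked: only the soundness inequality is used, and that is precisely where the over-approximation is built in. I expect no genuine obstacle here, as the proposition is essentially the inductive step justifying that abstract interpretation remains sound layer-by-layer. The one point that deserves explicit mention is the monotonicity of the set-lifted concrete transformer $T_{f_k}$, which is easy to overlook because it is packaged implicitly inside the powerset construction; making that observation explicit is the only care the proof requires.
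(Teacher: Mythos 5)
Your proof is correct and is precisely the standard argument the paper has in mind: the paper states this proposition without an explicit proof, and your chain $X_k = T_{f_k}(X_{k-1}) \subseteq T_{f_k}(\gamma(X^\sharp_{k-1})) \subseteq \gamma(T_{f_k}^\sharp(X^\sharp_{k-1})) = \gamma(X^\sharp_k)$, using monotonicity of the powerset-lifted $T_{f_k}$ followed by the soundness inclusion, is exactly the intended one-step justification. Your explicit remark that only soundness (not the full Galois connection) is needed is a fair and accurate observation.
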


Therefore, when performing abstract interpretation over the transformations in a DNN,
the abstract pre-state ${X^\sharp}_{k-1}$ is transformed into  abstract post-state  ${X^\sharp}_{k}$ by applying the abstract transformer $T_{f_k}^{\sharp}$ which is built on top of an abstract domain. This procedure starts from $k=1$ and continues until reaching the output layer (and getting ${X^\sharp}_{N-1}$).
Finally, we use ${X^\sharp}_{N-1}$ to check the property $C$ as follows:
\begin{equation}\label{equ:ai}
\gamma({X^\sharp}_{N-1})  \subseteq C.
\end{equation}

The following theorem states that this verification procedure based on abstract interpretation is sound for the DNN verification problem.
\begin{theorem}\label{theorem:1}
If Equation (\ref{equ:ai}) holds, then $f(X_0)\subseteq C$.
\end{theorem}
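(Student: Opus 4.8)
The plan is to reduce the theorem to a straightforward induction on the layer index $k$, invoking the Proposition at each step. First I would record the structural fact that the DNN $f$ is, by definition, the composition of its layer transformations, so that the image set $f(X_0)$ coincides exactly with $X_{N-1} = f_{N-1}(\ldots(f_1(X_0))\ldots)$, the concrete set of vectors reaching the output layer. With this identification, the theorem asks precisely that $X_{N-1} \subseteq C$.

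Next I would establish, by induction on $k$ from $0$ to $N-1$, the invariant $X_k \subseteq \gamma(X^\sharp_k)$. The base case $X_0 \subseteq \gamma(X^\sharp_0)$ is already noted in the text, reflecting that $X^\sharp_0$ is chosen as a sound abstraction of the input set. For the inductive step, assuming $X_{k-1} \subseteq \gamma(X^\sharp_{k-1})$, the Proposition yields $X_k \subseteq \gamma(X^\sharp_k)$ directly; crucially, the conclusion of the Proposition has exactly the form of its own hypothesis one layer later, so the chain closes and may be iterated. Applying this from $k=1$ up to $k=N-1$ gives $X_{N-1} \subseteq \gamma(X^\sharp_{N-1})$.

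Finally I would combine this with the hypothesis that Equation (\ref{equ:ai}) holds, namely $\gamma(X^\sharp_{N-1}) \subseteq C$. By transitivity of the subset relation, $f(X_0) = X_{N-1} \subseteq \gamma(X^\sharp_{N-1}) \subseteq C$, which is the desired conclusion. I do not expect a genuine obstacle here: the argument is just the repeated application of the already-packaged soundness of the individual abstract transformers (the Proposition) together with soundness of the input abstraction. The only point deserving care is the bookkeeping that identifies $f(X_0)$ with $X_{N-1}$ via the definition of $f$ as the layerwise composition, and confirming that the Proposition's hypothesis is genuinely re-established at every layer, so that the induction is legitimate rather than circular.
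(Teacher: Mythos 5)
Your proof is correct and follows exactly the route the paper intends: the paper states the Proposition ($X_{k-1} \subseteq \gamma(X^\sharp_{k-1})$ implies $X_k \subseteq \gamma(X^\sharp_k)$) and the base fact $X_0 \subseteq \gamma(X^\sharp_0)$ precisely so that Theorem~\ref{theorem:1} follows by the layer-by-layer induction you give, chained with $\gamma(X^\sharp_{N-1}) \subseteq C$ and the identification $f(X_0) = X_{N-1}$. Nothing is missing; your explicit attention to re-establishing the Proposition's hypothesis at each layer is exactly the bookkeeping the paper leaves implicit.
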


It's not hard to see that the other direction does not necessarily hold due to the potential incompleteness caused by the over-approximation made in both the abstract elements and the abstract transformers $T_{f_k}^{\sharp}$ in an abstract domain.

\begin{example}\label{example:AI} 
Suppose that $\bar x$ takes the value in $X$ given in Example~\ref{example:domains}, and we consider the transformation
$ f(\bar x) =\mathrm{ReLU}\left(\begin{pmatrix}
1 & 2 \\
1 & -1
\end{pmatrix}   \bar x+\begin{pmatrix}
0 \\
1 
\end{pmatrix}\right)$.
Now we use the three abstract domains to calculate the resulting abstraction $f^\sharp(X^\sharp)$ 
\begin{itemize}
\item Box. The abstraction after the affine transformation is $[0,6] \times [-1,3]$,
and thus the final result is $[0,6] \times [0,3]$.
\item Zonotope. After the affine transformation, the zonotope abstraction can be obtained straightforward:
\begin{align*}                                                                           
\left\{y_1=3+\frac 12 \epsilon_1 +  \epsilon_2 -\frac 12 \epsilon_3, \enspace y_2=1- \epsilon_1- \frac 12 \epsilon_2 -\frac 12 \epsilon_3 \mid \epsilon_1,\epsilon_2,\epsilon_3 \in [-1,1] \right\}.
\end{align*}
The first dimension $y_1$ is definitely positive, so it remains the same after the $\mathrm{ReLU}$ operation. The second dimension $y_2$ can be either negative or non-negative, so its abstraction after $\mathrm{ReLU}$ will become a box which only preserves the range in the non-negative part, i.e. $[0,3]$, so the final abstraction is
\begin{align*}                                                                           
\left\{y_1=3+\frac 12 \epsilon_1 +  \epsilon_2 -\frac 12 \epsilon_3, \enspace y_2=\frac 32+ \frac 32 \eta_1 \mid \epsilon_1,\epsilon_2,\epsilon_3,\eta_1 \in [-1,1] \right\},
\end{align*}
whose concretization is $[1,5] \times [0,3]$.
\item Polyhedra. It is easy to obtain the polyhedron before $P_1=\mathrm{ReLU}$ $\{y_2 \le 2,y_2 \ge -y_1+3, y_2 \ge y_1-5,y_2 \le -2y_1+10\}$. Similarly, the first dimension is definitely positive, and the second dimension can be either negative or non-negative, so the resulting abstraction is $(P_1 \wedge (y_2 \ge 0)) \vee (P_1 \wedge (y_2 = 0))$, i.e. $\{y_2 \le 2,y_2 \ge -y_1+3, y_2 \ge 0,y_2 \le -2y_1+10\}$.
\end{itemize}
Fig.~\ref{fig:aiexample} (the right part) gives an illustration for the abstract interpretation with the three abstract domains in this example.
\end{example}
 
The abstract value computed via abstract interpretation can be directly used  to verify properties.
Take the local robustness property, which expresses an invariance on the classification of 
$f$ over a region $B(  \bar{x}_0,\delta)$,
as an example. Let $l_i(\bar x)$ be the confidence of $\bar x$ being labeled as $i$, and $l(\bar x)=\mathrm{argmax}_{i}l_i(\bar x)$ be the label.
It has been shown in \cite{SZSBEGF2014,DBLP:conf/ijcai/RuanHK18} that DNNs are Lipschitz continuous. Therefore, when $\delta$ is small, we have that  $|l_i(\bar x)-l_i(\bar{x}_0)|$ is also small for all labels $i$. That is, if $l_i(\bar{x}_0)$ is significantly greater than $l_j(\bar{x}_0)$ for $j\neq i$, it is highly likely that $l_i(\bar{x})$ is also significantly greater than $l_j(\bar{x})$. It is not hard to see that the more precise the relations among $l_i(\bar{x}_0), l_i(\bar{x}), l_j(\bar{x}_0), l_j(\bar{x})$ computed via abstract interpretation, the more likely we can prove the robustness. Based on this reason, this paper aims to derive techniques to enhance the precision of abstract interpretation such that it can prove some more properties that cannot be proven by the original abstract interpretation.
 
\subsection{Symbolic propagation for DNN verification}

Symbolic propagation can ensure soundness while providing more precise results. In \cite{DBLP:journals/corr/abs-1804-10829}, a technique called symbolic interval propagation is present and we extend it to our abstraction interpretation framework so that it works on all abstract domains.
First, we use the following example to show that using only abstract transformations in an abstract domain may lead to precision loss, while using symbolic propagation could enhance the precision.

\begin{example}\label{example:improvement}
Assume that we have a two-dimensional input $(x_1,x_2) \in [0,1] \times [0,1]$ and a few transformations $y_1 := x_1+x_2$, $y_2 := x_1-x_2$, and $z := y_1+y_2$. Suppose we use the Box abstract domain to analyze the transformations.
\begin{itemize}
\item When using only the Box abstract domain, we have $y_1 \in [0,2]$, $y_2 \in [-1,1]$, and thus $z \in [-1,3]$ (i.e., $[0,2]+[-1,1]$). 
\item By symbolic propagation, we record $y_1=x_1+x_2$ and $y_2=x_1-x_2$ on the neurons $y_1$ and $y_2$ respectively, and then 
get $z=2x_1 \in [0,2]$. This result is more precise than that given by using only the Box abstract domain.
\end{itemize}
\end{example}

Non-relational (e.g., intervals) and weakly-relational abstract domains (e.g., zones, octagons, zonotopes, etc.)\cite{Min17}  may lose precision on the application of the transformations from DNNs. The transformations include affine transformations, ReLU, and max pooling operations.
Moreover, it is often the case for weakly-relational abstract domains that the composition of the optimal abstract transformers of individual statements in a sequence does not result in the optimal abstract transformer for the whole sequence, which has been shown in Example~3 when using only the Box abstract domain. A choice to precisely handle general linear transformations is to use the Polyhedra abstract domain which uses a conjunction of linear constraints as domain representation. However, the Polyhedra domain has the worst-case exponential space and time complexity when handling the ReLU operation (via the join operation in the abstract domain). As a consequence, DNN verification with the Polyhedra domain is impractical for large scale DNNs, which has been also confirmed  in \cite{AI2}.  

In this paper, we leverage symbolic propagation technique to enhance the precision for abstract interpretation based DNN verification. The insight behind is that affine transformations account for a large portion of the transformations in a DNN. Furthermore, when we verify properties such as robustness, the activation of a neuron can often be  deterministic for inputs around an input with small perturbation. Hence, there should be a large number of linear equality relations that can be derived from the composition of a sequence of linear transformations via symbolic propagation. And we can use such linear equality relations to improve the precision of the results given by abstract domains. In Section 6, our experimental results  confirm that, when the perturbation tolerance $\delta$ is small, there is a significant proportion of neurons   whose  ReLU activations are consistent. 

First, given $X_0$, a $\mathrm{ReLU}$ neuron $y:=\mathrm{ReLU}(\sum_{i=1}^n w_i x_i$ $+b)$
can be  classified into one of the following 3 categories (according to its range information): (1) definitely-activated, if the range of $\sum_{i=1}^n w_ix_i+b$ is a subset of $[0,\infty)$, (2) definitely-deactivated, if the range of $\sum_{i=1}^n w_ix_i+b$ is a subset of $(-\infty,0]$, and (3) uncertain, otherwise.

Now we detail our symbolic propagation technique. 
We first introduce a symbolic variable $s_i$ for each node $i$ in the input layer,
to denote the initial value of that node. For a $\mathrm{ReLU}$ neuron $d:=\mathrm{ReLU}(\sum_{i=1}^n w_i c_i+b)$ where $c_i$ is a symbolic variable, we make use of the resulting abstract value of abstract domain at this node to determine whether the value  of this node is definitely greater than 0 or definitely less than 0. If it is a definitely-activated neuron, we record for this neuron  the linear combination $\sum_{i=1}^n w_ic_i+b$ as its symbolic representation (i.e., the value of symbolic propagation). 
If it is a definitely-deactivated neuron, we record for this neuron the value $0$ as its symbolic representation.
Otherwise, we cannot have a linear combination as the symbolic representation and thus a fresh symbolic variable $s_d$ is introduced to denote the output of this $\mathrm{ReLU}$ neuron. We also record the bounds for $s_d$, such that the lower bound for $s_d$ is set to 0 (since the output of a $\mathrm{ReLU}$ neuron is always non-negative) and the upper bound keeps the one obtained by abstract interpretation. 

To formalize the algorithm for ReLU node, we first define the abstract states in the
analysis and three transfer functions for
linear assignments, condition tests and joins respectively.
An abstract state in our analysis
is composed of an abstract element for a numeric domain (e.g., Box) \(\absEle \in \absDom\), a set of free symbolic variables \(\cstVarSet\) (those not equal to any linear expressions), a set of constrained symbolic variables \(\symVarSet\) (those equal to a certain linear expression), and a map from constrained symbolic variables 
to linear expressions
\(\symM::= \symVarSet \Rightarrow \{\sum_{i=1}^n a_i x_i + b \mid x_i \in \cstVarSet \}\). Note that we only allow free variables in the linear expressions in \(\symM\).
In the beginning, all input variables are taken as free symbolic variables. In Algorithm~\ref{algoAssign}, we show the transfer functions for linear assignments \(\absSem{y:=\sum_{i=1}^n w_i x_i + b}\) which over-approximates the behaviors
of \(y:=\sum_{i=1}^n w_i x_i + b\). If \(n>0\) (i.e., the right value expression is not a constant), variable \(y\) is added to the constrained variable set \(\symVarSet\).
All constrained variables in \(\mlexpr = \sum_{i=1}^n w_i x_i + b\) are replaced by their corresponding expressions in \(\symM\), and the map from \(y\) to the new \(\mlexpr\) is recorded in \(\symM\). Abstract numeric element \(\absEle\) is updated
by the transfer function for assignments in the numeric domain \(\absSem{y:=\mlexpr}_{\absDom}\). If \(n\leq 0\), the right-value expression is a
constant, then \(y\) is added to \(\cstVarSet\), and is removed from  \(\symVarSet\)
and \(\symM\). 

The abstract transfer function for condition test is defined as 
\[\absSem{\mlexpr \leq 0}(\absEle,\cstVarSet,\symVarSet,\symM) ::=(
\absSem{\mlexpr \leq 0}_{\absDom}(\absEle),\cstVarSet,\symVarSet,\symM), \]
which only updates the abstract element \(\absEle\) by the transfer function
in the numeric domain \(\absDom\).

\begin{algorithm}
\caption{Transfer function for linear assignments \(\absSem{y:=\sum_{i=1}^n w_i x_i + b}\)}\label{algoAssign}

\KwIn{abstract numeric element $\absEle\in\absDom$, free variables  $\cstVarSet$, constrained variables $\symVarSet$, symbolic map \(\symM\)}

\(\mlexpr\leftarrow \sum_{i=1}^n w_i x_i + b \)

When the right value expression is not a constant

\eIf{\(n>0\)}{
    \For{\(i \in [1,n]\)}{
        \If{\(x_i \in \symVarSet\)}{
            \(\mlexpr = \mlexpr_{|x_i\leftarrow \symM(x_i)}\)
        }
    }
    \(\symM = \symM \cup \{y \mapsto \mlexpr\}\)\quad 
    \(\symVarSet = \symVarSet\cup\{y\}\)\quad
    \(\cstVarSet = \cstVarSet/\{y\}\)\quad
    \(\absEle = \absSem{y:=\mlexpr}_{\absDom}\)
}{
    \(\symM = \symM/(y\mapsto *)\)\quad
    \(\cstVarSet = \cstVarSet\cup\{y\}\)\quad
    \(\symVarSet = \symVarSet/\{y\}\)\quad
    \(\absEle = \absSem{y:=\mlexpr}_{\absDom}\)
}
\KwRet{\((\absEle,\cstVarSet,\symVarSet,\symM)\)}
\end{algorithm}

The join algorithm in our analysis is defined in Algorithm~\ref{algoJoin}. 
Only the constrained variables arising in both input \(\symVarSet_0\) and \(\symVarSet_1\) are
with the same corresponding linear expressions are taken as constrained variables.
The abstract element in the result is obtained by applying the join operator in
the numeric domain \(\sqcup_{\absDom}\).

The transfer function for a ReLU node is defined as
\begin{align*}
    \absSem{y:=\mathrm{ReLU}(\sum_{i=1}^n w_i x_i+b)}(\absEle,\cstVarSet,\symVarSet,\symM)::= 
    \algJoin(\absSem{y>0}(\psi),\absSem{y:=0}(\absSem{y<0})(\psi)),
\end{align*}
where $\psi = \absSem{y:=\sum_{i=1}^n w_i x_i + b}(\absEle,\cstVarSet,\symVarSet,\symM)$.

\begin{algorithm}
\caption{Join algorithm \(\algJoin\)}\label{algoJoin}
\KwIn{\((\absEle_0,\cstVarSet_0,\symVarSet_0,\symM_0)\) and \((\absEle_1,\cstVarSet_1,\symVarSet_1,\symM_1)\)  }

\(\absEle = \absEle_0 \sqcup_{\absDom} \absEle_1\)

\(\symM = \symM_0 \cap \symM_1\)

\(\symVarSet = \{x \mid \exists \mlexpr, x \rightarrow \mlexpr \in \symM\}\)

\(\cstVarSet = \cstVarSet_0 \cup (\symVarSet_0/\symVarSet)\)

\KwRet{\((\absEle,\cstVarSet,\symVarSet,\symM)\)}
\end{algorithm}

For a max pooling node $d:=\max_{1 \le i \le k} c_i$, if there exists some $c_j$ whose lower bound is larger than the upper bound of $c_i$ for all $i \ne j$, 
we set $c_j$ as the symbolic representation for $d$. Otherwise, we introduce a fresh symbolic variable $s_d$ for $d$ and record its bounds wherein its lower (upper) bound is the maximum of the lower (upper) bounds of $c_i$'s. Note that the lower (upper) bound of each $c_i$ can be derived from the abstract value for this neuron given by abstract domain.

The algorithm for max-pooling layer can be defined with the three aforementioned transfer functions as follows:

\[
\begin{array}{c}
    \algJoin(\phi_1,\algJoin(\phi_2,\ldots,\algJoin(\phi_{k-1},\phi_{k}))), \\
    \text{where} \quad \phi_i = \absSem{d:=c_i}\absSem{c_i\geq c_1}\ldots\absSem{c_i\geq c_k}(\absEle,\cstVarSet,\symVarSet,\symM)
\end{array}
\]

\begin{example}
\label{example:symbolicInterval}
For the DNN shown in Figure~\ref{fig:symbolicInterval}(a),
there are two input nodes denoted by symbolic variables $x$ and $y$, two hidden nodes, and one output node. The initial ranges of the input symbolic variables $x$ and $y$ are given, i.e., $[4,6]$ and $[3,4]$ respectively. The weights are labeled on the edges. It is not hard to see that, when using the Interval abstract domain, (the inputs of) the two hidden nodes have bounds $[17,24]$ and $[0,3]$ respectively. For the hidden node with $[17,24]$, we know that this ReLU node is definitely activated, and thus we use symbolic propagation to get a symbolic expression $2x+3y$ to symbolically represent the output value of this node. Similarly, for the hidden node with $[0,3]$, we get a symbolic expression $x-y$. Then for the output node, symbolic propagation results in $x+4y$, which implies that the output range of the whole DNN is $[16,22]$. If we use only the Interval abstract domain without symbolic propagation, we will get the output range $[14,24]$,  which is less precise than $[16,22]$.

For the DNN shown in Figure~\ref{fig:symbolicInterval}(b), we change the initial range of the input variable $y$ to be $[4.5,5]$. For the hidden ReLU node with $[-1,1.5]$, it is neither definitely activated nor definitely deactivated, and thus we introduce a fresh symbolic variable $s$ to denote the output of this node, and set its bound to $[0, 1.5]$. For the output node, symbolic propagation results in $2x+3y-s$, which implies that the output range of the whole DNN is $[20,27]$.
\end{example}

\begin{figure}[t]
\vspace{-10pt}
  \centering
    \includegraphics[scale=0.6,trim=230 330 220 320,clip]{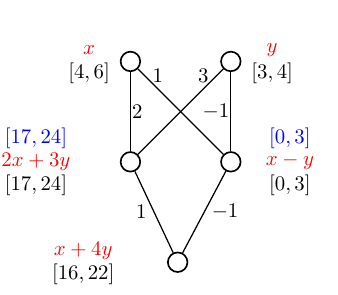}
  \includegraphics[scale=0.6,trim=320 330 110 320,clip]{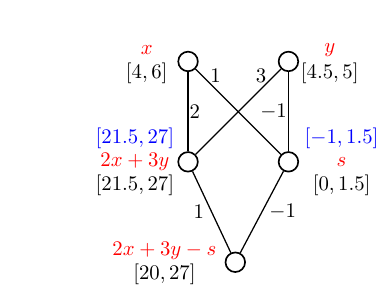}\\
  ~~~~~(a)~~~~~~~~~~~~~~~~~~~~~~~~~~~~~~~~~~~~~~~~~(b)~~~  
  \caption{An illustrative example of symbolic propagation}
  \label{fig:symbolicInterval}
  \vspace{-1mm}
\end{figure}
 
For a definitely-activated neuron, we 
utilize its symbolic representation 
to enhance the precision of abstract domains. We add the linear constraint  $d==\sum_{i=1}^n w_ic_i+b$ into the abstract value at (the input of) this node, via the meet operation (which is used to deal with conditional test in a program) in the abstract domain~\cite{CC1977}. 
If the precision of the abstract value for the current neuron is improved, we may find more definitely-activated neurons 
in the subsequent layers. In other words, 
the analysis based on abstract domain  and our symbolic propagation mutually improves the precision of each other   on-the-fly.

After obtaining  symbolic representation for all the neurons in a layer $k$, the computation proceeds to layer $k+1$. The computation terminates after completing the computation for the output layer. All symbolic representations in the output layer are evaluated to obtain value bounds.

The following theorem shows some results on precision of our symbolic propagation technique.

\begin{theorem}\label{thm:mainprecision}
(1) For an FNN $f: \mathbb R^m \to \mathbb R^n$ and a box region $X \subseteq \mathbb R^m$, the Box abstract domain with symbolic propagation can give a more precise abstraction for $f(X)$ than the Zonotope abstract domain without symbolic propagation.

(2) For an FNN $f: \mathbb R^m \to \mathbb R^n$ and a box region $X \subseteq \mathbb R^m$, the Box abstract domain with symbolic propagation and the Zonotope abstract domain with symbolic propagation gives the same abstraction for $f(X)$.

(3) There exists a CNN $g: \mathbb R^m \to \mathbb R^n$ and a box region $X \subseteq \mathbb R^m$ s.t. the Zonotope abstract domain with symbolic propagation give a more precise abstraction for $g(X)$ than the Box abstract domain with symbolic propagation.
\end{theorem}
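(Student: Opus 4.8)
The three statements split into two kinds: (1) and (3) are existence claims that I would settle by explicit witnesses, whereas (2) is a universal equality that I would prove by induction over the layers. The common engine is the following \emph{exactness lemma}: under symbolic propagation every neuron carries a linear form $\sum_i a_i v_i + b$ whose variables $v_i$ are either inputs or fresh variables introduced at uncertain $\mathrm{ReLU}$/pooling nodes, and all of these are mutually independent and box-bounded; for such a form interval arithmetic over the box returns its \emph{exact} range, and a Zonotope whose generators are precisely these independent variables returns the same range. Thus proving (2) amounts to showing the two analyses never diverge, while proving (1) and (3) amounts to pinpointing where they can.

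For part (2) I would carry an induction whose invariant is that Box-with-symbolic and Zonotope-with-symbolic keep identical symbolic components $(\cstVarSet,\symVarSet,\symM)$, identical numeric bounds on every neuron, and all live variables independent. Affine assignments preserve the invariant because $\absSem{y:=\mlexpr}$ performs the same substitution in both domains and, by the exactness lemma, yields identical bounds. The delicate case is the $\mathrm{ReLU}$ transfer $\algJoin(\absSem{y>0}\psi,\,\absSem{y:=0}(\absSem{y<0})\psi)$ with $\psi=\absSem{y:=\sum a_i x_i+b}(\absEle,\cstVarSet,\symVarSet,\symM)$: here the comparison is applied to the freshly assigned scalar $y$, so it is axis-aligned on $y$ and Box tightens it to $[0,u]$ exactly as Zonotope does; moreover the ``off'' branch sets $y:=0$, so the symbolic-map intersection in Algorithm~\ref{algoJoin} frees $y$ while the numeric join decouples it from the input generators, restoring independence. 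I expect the main obstacle here to be formalizing this decoupling --- that no residual correlation between the freed $y$ and the inputs survives in the Zonotope --- since this is exactly what prevents the Zonotope from ever overtaking the Box on an FNN.

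Part (1) I read as an existence claim (\emph{can give}) and settle by a witness: for $f(x)=\mathrm{ReLU}(x)$ on $X=[-1,1]$, Box-with-symbolic classifies the node as uncertain and returns the exact range $[0,1]$, whereas plain Zonotope uses its standard $\mathrm{ReLU}$ relaxation $\tfrac12\epsilon_1+\tfrac14+\tfrac14\epsilon_2$, of range $[-\tfrac12,1]\supsetneq[0,1]$. (In fact the two are incomparable in general, so no uniform refinement holds and a witness is the right tool.) For part (3) the separation must come from max-pooling, since convolution is affine and $\mathrm{ReLU}$ is already handled by (2). I would take input $(x_1,x_2)\in[0,1]^2$, a convolution producing the two definitely-active neurons $c_1=x_1-x_2+\tfrac32$ and $c_2=x_2-x_1+\tfrac32$, and a max-pool node $d=\max(c_1,c_2)=\tfrac32+|x_1-x_2|$ whose true range is $[\tfrac32,\tfrac52]$. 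Crucially the pooling comparison $c_1\ge c_2\Leftrightarrow x_1\ge x_2$ is genuinely relational (two distinct neurons), unlike $\mathrm{ReLU}$'s single-variable test: Box cannot tighten either branch and returns the coordinate-wise hull $[\tfrac12,\tfrac52]$, while the Zonotope meet with the half-space $x_1\ge x_2$ (which Box cannot represent) tightens $c_1$ within that branch, so the joined output lands strictly inside $[\tfrac12,\tfrac52]$.

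The real work concentrates on part (2): the equality must be established \emph{uniformly} over all FNNs, which means ruling out both forms of potential Zonotope gain --- the relational tightening of condition tests and the retention of correlations across joins. The decisive structural observation, which I would isolate as a lemma, is the ordering of test and assignment: in the $\mathrm{ReLU}$ transfer the comparison acts on the already-assigned scalar (axis-aligned, hence harmless), whereas in max-pooling the comparisons precede the assignment and relate two distinct neurons (relational, hence exactly where Zonotope can win, as part (3) exhibits). Once this dichotomy and the join-decoupling claim are airtight, parts (1) and (3) reduce to verifying the two witnesses above.
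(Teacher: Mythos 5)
Your treatments of parts (2) and (3) essentially coincide with the paper's. For (2), the paper makes exactly your reduction: linear layers are handled exactly by both analyses, decided $\mathrm{ReLU}$ nodes are identical, and for uncertain nodes it asserts that the zonotope ``has a join operation and destroys all the relations among them,'' so Zonotope with propagation ``behaves as Box does.'' Your version is in fact the more careful one, since you isolate as an explicit lemma the join-decoupling claim (no residual correlation between the freed neuron and the input generators survives the join) that the paper simply asserts; you are right that this is the crux, and the paper gives no formal argument for it. For (3), the paper likewise uses a two-neuron max-pooling witness, $x_1=2+\epsilon_1+\epsilon_2$, $x_2=2+\epsilon_1-\epsilon_2$ (obtained from a box by a linear layer), where Box with propagation yields $[0,4]$ and Zonotope with propagation $[1,4]$; your $d=\tfrac32+|x_1-x_2|$ construction is structurally the same witness, resting on the same observation that the pooling comparisons are genuinely relational while the $\mathrm{ReLU}$ test is axis-aligned.

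The genuine gap is in part (1). The paper reads and proves (1) as a \emph{universal} dominance claim --- for every FNN and every box region, Box with symbolic propagation is at least as precise as plain Zonotope --- by the same node-category analysis as in (2): propagation is exact on linear layers and decided nodes, while Zonotope's join-of-meets $\mathrm{ReLU}$ transfer loses at least as much on uncertain nodes. Your existential reading, settled by one witness, proves a strictly weaker statement, and your parenthetical claim that the two analyses are ``incomparable in general'' is false under the paper's semantics; indeed it contradicts your own part (2), since adding symbolic equalities via meets only refines the zonotope analysis, so (2) yields $\text{Box+sym} = \text{Zono+sym} \sqsubseteq \text{Zono}$, which is precisely the uniform refinement you deny. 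The root cause is an inconsistency in which zonotope $\mathrm{ReLU}$ transformer you use: your witness invokes the slope-retaining relaxation $\tfrac12\epsilon_1+\tfrac14+\tfrac14\epsilon_2$, whereas the paper's transformer is the join-of-meets $\algJoin(\absSem{y>0}\psi,\absSem{y:=0}(\absSem{y<0})\psi)$ --- the very relation-destroying join your part (2) depends on. Under a slope-retaining transformer your part (2) fails (the retained $y$--$x$ correlation can beat Box downstream, e.g.\ on $z:=y-\lambda x$); under the paper's transformer your incomparability claim fails and even the strictness of your $\mathrm{ReLU}(x)$, $x\in[-1,1]$ witness is unclear, since the join-of-meets may return exactly $[0,1]$. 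To repair (1), fix the paper's semantics and either reuse its universal node-by-node argument or, more cleanly, derive (1) from your own (2) together with the monotonicity observation above.
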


\begin{proof}
(1) Since an FNN only contains fully connected layers, we just need to prove that, Box with symbolic propagation (i.e., BoxSymb) is always more precise than Zonotope in the transformations on each RELU neuron $y:=\mathrm{ReLU}(\sum_{i=1}^n w_ix_i$ $+b)$.
Assume that before the transformation, BoxSymb is more precise or as
precise as Zonotope. Since the input is a Box region, the assumption is valid in the beginning. Then we consider three cases: (a)
in BoxSymb, the sign of \(\sum_{i=1}^n w_ix_i+b\) is uncertain,
then it must also be uncertain in Zonotope. In both domains, 
a constant interval with upper bound computed by \(\sum_{i=1}^n w_ix_i+b\) and lower bound as 0 is assigned to $y$ (this can be
inferred from our aforementioned algorithms and \cite{ghorbal2009zonotope}). With our assumption, the upper bound computed by  BoxSymb is more precise than that in Zonotope;
(b) in BoxSymb, the sign of \(\sum_{i=1}^n w_ix_i+b\) is always
positive, then it must be always positive or uncertain in Zonotope.
In the former condition, BoxSymb is more precise because it loses
no precision, while Zonotope can lose precision because of its
limited expressiveness. In the later condition, BoxSymb is more precise
obviously; (c) in BoxSymb, the sign of \(\sum_{i=1}^n w_ix_i+b\) is always negative, then it must be always negative or uncertain
in Zonotope. Similar to case (b), BoxSymb is also more precise in this case. 

(2) Assume that before each transformation on a ReLU neuron $y:=\mathrm{ReLU}(\sum_{i=1}^n w_ix_i$ $+b)$, BoxSymb and
ZonoSymb (Zonotope with symbolic propagation) are with same precision.
This assumption is also valid when the input is a Box region.
Then the evaluation of \(\sum_{i=1}^n w_ix_i+b\) is same in
BoxSymb and ZonoSymb, thus in the three cases:(a) the sign of \(\sum_{i=1}^n w_ix_i+b\) is uncertain, they both compute
a same constant interval for $y$; (b) and (c)
\(\sum_{i=1}^n w_ix_i+b\) is always positive or negative, they both lose no precision.

(3) It is easy to know that, ZonoSymb is more precise or as precise as BoxSymb in all transformations. In CNN, with Max-Pooling layer, we just need to give an example that ZonoSymb can be more precise. Let the Zonotope $X'=\{x_1=2+\epsilon_1+\epsilon_2, \enspace x_2=2+\epsilon_1-\epsilon_2 \mid \epsilon_1,\epsilon_2 \in [-1,1]\}$ and the max pooling node $y=\max\{x_1,x_2\}$. Obviously $X'$ can be obtained through a linear transformation on some box region $X$. With Box with symbolic propagation, the abstraction of $y$ is $[0,4]$, while Zonotope with symbolic propagation gives the abstraction is $[1,4]$. 
\end{proof}

Thm~\ref{thm:mainprecision} gives us some insights: Symbolic propagation technique has a very strong power (even stronger than Zonotope) in dealing with ReLU nodes, while Zonotope gives a more precise abstraction on max pooling nodes. It also provides a useful instruction: When we work with FNNs with the input range being a box, we should use Box with symbolic propagation rather than Zonotope with symbolic propagation since it does not improve the precision but takes more time. Results in Thm~\ref{thm:mainprecision} will also be illustrated in our experiments.

\section{Abstract Interpretation as an Accelerator for SMT-based DNN Verification}
In this section we briefly recall DNN verification based on SMT solvers, and then describe how to utilize the results by abstract interpretation with our symbolic propagation to improve its performance.

\subsection{SMT based DNN verification}
In \cite{DBLP:conf/cav/KatzBDJK17,DBLP:conf/atva/Ehlers17}, two SMT solvers Reluplex and Planet were presented to verify DNNs.
Typically an SMT solver is the combination of
a SAT solver with the specialized decision procedures for other theories. 
The verification of DNNs uses linear arithmetic over real numbers, in which an atom may have the form of $\sum_{i=1}^n w_i x_i \le b$, where $w_i$ and $b$ are real numbers.
Both Reluplex and Planet use the DPLL algorithm to split cases and rule out conflict clauses. They are different in dealing with the intersection. For Reluplex, it inherits rules from the Simplex algorithm and adds a few rules dedicated to $\mathrm{ReLU}$ operation. Through the classical pivot operation, it searches for a solution to the linear constraints, and then apply the rules for $\mathrm{ReLU}$ to ensure the $\mathrm{ReLU}$ relation for every node. Differently, Planet uses linear approximation to over-approximate the DNN, and manage the conditions of $\mathrm{ReLU}$ and max pooling nodes with logic formulas.

\subsection{Abstract interpretation with symbolic propagation as an accelerator}

SMT-based DNN verification approaches are often not efficient, e.g., relying on case splitting for ReLU operation. In the worst case, case splitting is needed for each ReLU operation in a DNN, which leads to an exponential blow-up. In particular, when analyzing large-scale DNNs, SMT-based DNN verification approaches may suffer from the scalability problem and account time out, which is also confirmed experimentally in \cite{AI2}.

In this paper, we utilize the results of abstract interpretation (with symbolic propagation) to accelerate SMT-based DNN verification approaches. 
More specifically, we use the bound information of each ReLU node
(obtained by abstract interpretation) 
to reduce the number of case-splitting, and thus 
accelerate SMT-based DNN verification.
For example, on a neuron $d:=\mathrm{ReLU}(\sum_{i=1}^n w_ic_i+b)$, if we know  that this node is a definitely-activated node according to the bounds given by abstract interpretation, we 
only consider the case $d:=\sum_{i=1}^n w_ic_i+b$ and thus no split is applied.
We remark that, this does not compromise the precision of SMT-based DNN verification while improving their efficiency. 

 
\section{Discussion}
In this section, we discuss the soundness guarantee of our approach. Soundness is an essential 
property of formal verification.

Abstract interpretation is known for its soundness guarantee for analysis and verification \cite{Min17}, since it conducts over-approximation to enclose all the possible behaviors of the original system. Computing over-approximations for a DNN is thus our soundness guarantee in this paper.
As shown in Theorem~\ref{theorem:1}, if the results of abstract interpretation show that the property $C$ holds (i.e., $\gamma({X^\sharp}_N)  \subseteq C$ in Equation~\ref{equ:ai}), then the property also holds for the set of actual executions of the DNN (i.e.,  $f(X_0)\subseteq C$). 
If the results of abstract interpretation can not prove that the property $C$ holds, however, the verification is inconclusive. In this case, the results of the chosen abstract domain are not precise enough to prove the property, and thus more powerful abstract domains are needed.  Moreover, our symbolic propagation also preserves soundness, since it uses symbolic substitution  to compute the composition of linear transformations.  

On the other hand, many existing DNN verification tools
do not guarantee soundness.
For example, Reluplex~\cite{DBLP:conf/cav/KatzBDJK17} (using GLPK), Planet~\cite{DBLP:conf/atva/Ehlers17} (using GLPK), and Sherlock~\cite{DBLP:conf/nfm/DuttaJST18} (using Gurobi)  all rely on the floating point implementation of linear programming solvers, which is unsound. Actually, most state-of-the-art linear programming solvers use floating-point arithmetic and only give approximate solutions which may not be the actual optimum solution or may even lie outside the feasible space~\cite{NS04}. It may happen that a linear programming solver implemented via floating point arithmetic  wrongly claims that a feasible linear system is infeasible or the other way round. In fact, \cite{DBLP:conf/nfm/DuttaJST18}
reports several false positive results in Reluplex, and mentions that this comes from unsound floating point implementation.

\section{Experimental Evaluation}\label{section:experiment}
We present the design and results of our experiments. 

\subsection{Experimental setup}

\textbf{Implementation.} 
c is the first to utilize abstract interpretation to verify DNNs, 
and has implemented all the transformers mentioned in Section~\ref{s:absdnn}. 
Since the implementation of AI$^2$ is not available, we have re-implemented these transformers and refer to them as \air.
We then implemented our symbolic propagation technique based on \air\ and use \air\ as the baseline comparison
in the experiments. Both implementations use general frameworks and thus can run on various abstract domains. 
In this paper, we chose Box (from Apron \footnote{\label{ftn:Apron_Elina_fork}https://github.com/ljlin/Apron\_Elina\_fork}),
T-Zonotope (Zonotope from Apron \footref{ftn:Apron_Elina_fork}) 
and E-Zonotope (Elina Zonotope with the join operator 
\footnote{https://github.com/eth-sri/ELINA/commit/152910bf35ff037671c99ab019c1915e93dde57f}) 
as the underlying domains.

\textbf{Datasets and DNNs.}
We use MNIST  \cite{L1998Gradient} and ACAS Xu \cite{DBLP:conf/emsoft/JeanninGKGSZP15,DBLP:conf/tacas/EssenG14} as the datasets in our experiments. 
MNIST contains $60,000$ $28 \times 28$ grayscale handwritten digits. We can train DNNs to classify the pictures by the written digits on them.
The ACAS Xu system is aimed to avoid airborne collisions and it uses an observation table to make decisions for the aircraft. In \cite{DBLP:journals/corr/abs-1810-04240}, the observation table can be realized by training a DNN instead of storing it.
 
On MNIST, we train seven FNNs and two CNNs. 
The seven FNNs  are of the size $3 \times 20$, $6 \times 20$, $3 \times 50$, $3 \times 100$, 
$6 \times 100$, and $9 \times 200$, where $m \times n$ refers to $m$ hidden layers with $n$ 
neurons in each hidden layer. The CNN1 consists of $2$ convolutional, $1$ max-pooling, $2$
convolutional, $1$ max-pooling, and $3$ fully connected layers in sequence, for a total of 12,412 neurons.
The CNN2 has $4$ convolutional and $3$ fully connected layers (89572 neurons).
On ACAS Xu, we use the same networks as those in \cite{DBLP:conf/cav/KatzBDJK17}. 

\noindent\textbf{Properties.}
We consider the local robustness property with respect to the input region
defined as follows:

\begin{align*}
X_{\bar x, \delta}=\{\bar x' \in \mathbb{R}^m \mid \forall i.1-\delta \le x_i \le x_i' \le 1 \vee x_i=x_i'\}.
\end{align*}
In the experiments, the optional robustness bounds are $0.1,0.2,0.3,0.4,0.5,0.6$.
All the experiments are conducted on an 
openSUSE Leap 15.0 machine with 
Intel
i7
CPU@3.60GHz and 16GB memory.
\subsection{Experimental Results}
%

\begin{table}[p]
    \centering
    \begin{subtable}[h]{1\textwidth}
        \centering
        \scalebox{1}{
            \begin{tabular}{c|cc|ccc|c}
            \toprule
             & \multicolumn{2}{c|}{\air} & \multicolumn{3}{|c|}{Symb} & \multirow{2}{*}{Planet}\\
            \cmidrule(r){2-3} \cmidrule(l){4-6}  
            {}  & TZono & EZono & Box & TZono & EZono &  \\
            \midrule
            FNN1  & 28.23348\% & 28.02098\% &   9.69327\% &    9.69327\% & 9.69327\% &    7.05553\% \\
            FNN2  & 24.16382\% & 22.13319\% &   1.76704\% &    1.76704\% & 1.76704\% &    0.89089\% \\
            FNN3  & 26.66453\% & 26.30852\% &   6.88656\% &    6.88656\% & 6.88656\% &    4.51223\% \\
            FNN4  & 28.47243\% & 28.33535\% &   5.13645\% &    5.13645\% & 5.13645\% &    2.71537\% \\
            FNN5  & 35.61163\% & 35.27187\% &   3.34578\% &    3.34578\% & 3.34578\% &    0.14836\% \\
            FNN6  & 38.71020\% & 38.57376\% &   7.12480\% &    7.12480\% & 7.12480\% &    1.94230\% \\
            FNN7  & 41.76517\% & 41.59382\% &   5.52267\% &    5.52267\% & 5.52267\% &    \quad 1h TIMEOUT  \\
            CNN1  & 24.19607\% & 24.13725\% &   21.78279\% &   7.58917\% & 7.56223\% &    \quad 8h TIMEOUT  \\
            CNN2  &OOM         & OOM        &   1.09146\%  &   OOM       & OOM       &    \quad 8h TIMEOUT  \\
            \bottomrule
            \end{tabular}
        }
        \caption{
            Bound proportions (smaller is better) of different abstract interpretation approaches with the robustness bound $\delta \in \{0.1, 0.2, 0.3, 0.4,0.5, 0.6\}$, and the fixed
            input pictures 767, 1955, and 2090;
        }
        \label{tab:3picboundrate}
    \end{subtable} 
    \begin{subtable}[h]{1\textwidth}
        \centering
        \scalebox{0.805}{
            \begin{tabular}{c|cccccc|cccccc|cc}
            \toprule
             & \multicolumn{6}{c|}{\air} & \multicolumn{6}{|c|}{Symb} & \multicolumn{2}{|c}{\multirow{2}{*}{Planet}}\\
            \cmidrule(r){2-7} \cmidrule(l){8-13}  
            {} & \multicolumn{2}{c}{Box}& \multicolumn{2}{c}{TZono} &\multicolumn{2}{c|}{EZono} & \multicolumn{2}{c}{Box} & \multicolumn{2}{c}{TZono} &\multicolumn{2}{c|}{EZono} &  \\
            \hline
            FNN1 & 11.168    & 0.2 &13.482     & 0.5 &44.05      & 0.5 &12.935     & 0.6  &17.144     & 0.6 &45.88     & 0.6  &20.179    & 0.6 \\
            FNN2 & 12.559    & 0   &16.636     & 0.2 &50.59      & 0.2 &15.075     & 0.5  &22.333     & 0.5 &49.92     & 0.5  &35.84     & 0.6  \\
            FNN3 & 12.699    & 0.2 &18.748     & 0.3 &49.812     & 0.3 &19.042     & 0.6  &28.128     & 0.6 &54.77     & 0.6  &76.106    & 0.6 \\
            FNN4 & 15.583    & 0.1 &29.495     & 0.3 &58.892     & 0.3 &37.716     & 0.6  &56.47      & 0.6 &76.00     & 0.6  &351.139   & 0.6  \\
            FNN5 & 28.963    & 0   &81.49      & 0.2 &149.791    & 0.2 &90.268     & 0.4  &154.222    & 0.4 &173.263   & 0.4  &1297.485  & 0.6  \\
            FNN6 & 62.766    & 0   &398.565    & 0.1 &538.076    & 0.1 &323.328    & 0.3  &650.629    & 0.3 &745.454   & 0.3  &15823.208 & 0.3 \\
            FNN7 & 111.955   & 0   &1674.465   & 0   &1627.72    & 0   &642.978    & 0.3  &1524.975   & 0.3 &1489.604  & 0.3  &\quad 1h TIMEOUT  \\
            CNN1 & 2340.828  & 0   &6717.57    & 0.2 &94504.195  & 0.2 &5124.681   & 0.2  &8584.555   & 0.3 &45452.102 & 0.3  &\quad 8h TIMEOUT  \\
            CNN2 & 41292.291 & 0   &OOM        & 0   &OOM        & 0   &105850.271 & 0.3  &OOM        & 0   &OOM       & 0    &\quad 8h TIMEOUT  \\
            \bottomrule
            \end{tabular}
        }

        \caption{
            The time (in second) and the maximum robustness bound $\delta$ which can be verified 
            through the abstract interpretation technique and the planet bound, with optional 
            $\delta \in \{0.1,0.2,0.3,0.4,0.5,0.6\}$ and the fixed input picture 2090;
        }
    \end{subtable} 
    \begin{subtable}{1\textwidth}
        \centering
        \scalebox{0.57}{
             \begin{tabular}{c|c|c|c|c|c|c|c}
            \toprule
             & \multicolumn{3}{c|}{\air} & \multicolumn{3}{|c|}{Symb} & \multirow{2}{*}{Planet}\\
            \cmidrule(r){2-4} \cmidrule(l){5-7}  
            {}  & Box & TZono & EZono & Box & TZono & EZono &  \\
            \midrule
            FNN1(60)   &57    \enspace 44    \enspace 34     & 59    \enspace 52    \enspace 38     &59     \enspace 52    \enspace 38     &59    \enspace 53    \enspace 44    &59\enspace53\enspace44          &59\enspace53\enspace44            &59\enspace56\enspace55 \\
            FNN2(120)  &103   \enspace 59    \enspace 38     & 118   \enspace 109   \enspace 66     &118    \enspace 111   \enspace 66     &118   \enspace 113   \enspace 107   &118\enspace113\enspace107       &118\enspace113\enspace107         &119\enspace114\enspace110  \\
            FNN3(150)  &136   \enspace 93    \enspace 66     & 141   \enspace 127   \enspace 85     &141    \enspace 127   \enspace 85     &143   \enspace 133   \enspace 110   &143\enspace133\enspace110       &143\enspace133\enspace110         &146\enspace142\enspace135 \\
            FNN4(300)  &250   \enspace 144   \enspace 105    & 294   \enspace 209   \enspace 130    &294    \enspace 209   \enspace 130    &295   \enspace 254   \enspace 182   &295\enspace254\enspace182       &295\enspace254\enspace182         &296\enspace276\enspace254  \\
            FNN5(600)  &289   \enspace 160   \enspace 106    & 513   \enspace 200   \enspace 125    &513    \enspace 200   \enspace 125    &589   \enspace 510   \enspace 236   &589\enspace510\enspace236       &589\enspace510\enspace236         &593\enspace558\enspace493  \\
            FNN6(1200) &472   \enspace 247   \enspace 181    & 782   \enspace 339   \enspace 195    &782    \enspace 339   \enspace 195    &1176  \enspace 790   \enspace 250   &1176\enspace790\enspace250      &1176\enspace790\enspace250        &1189\enspace1089\enspace772  \\
            FNN7(1800) &469   \enspace 271   \enspace 177    & 770   \enspace 350   \enspace 200    &775    \enspace 350   \enspace 200    &1773  \enspace 741   \enspace 263   &1773\enspace741\enspace263      &1773\enspace741\enspace263        &1h TIMEOUT  \\
            CNN1(12412) &12226 \enspace 11788 \enspace 11280 & 12371 \enspace 12119 \enspace 11786  &12371  \enspace 12122 \enspace 11786  &12373 \enspace 12094 \enspace 11659 &12376\enspace12193\enspace11877 &12376\enspace12196 \enspace11877  &8h TIMEOUT  \\
            CNN2(89572) &85793 \enspace 77241 \enspace 70212 & OOM                                  &OOM                                   &89190 \enspace 86910 \enspace 81442 &OOM                             &OOM                               &8h TIMEOUT  \\
            \bottomrule
            \end{tabular}
        }
        \caption{
            The number of hidden $\mathrm{ReLU}$ neurons whose behavior can be decided with the 
            bounds our abstract interpretation technique and Planet provide, with optional robustness
            bound $\delta \in \{0.1,0.4,0.6\}$ and the fixed input picture 767.
        }
    \end{subtable} 
    \caption{Experimental results of abstract interpretation for MNIST DNNs with different approaches}
    \label{fig:experimets}
\end{table}
We compare seven approaches: \air\  with Box, T-Zonotope and E-zonotope as underlying domains and  
Symb (i.e., our enhanced abstract  interpretation with symbolic propagation) with Box, 
T-Zonotope and E-zonotope as underlying domains, and Planet~\cite{DBLP:conf/atva/Ehlers17}, which
serves as the benchmark verification approach 
(for its ability to compute bounds).  
\paragraph{Improvement on Bounds}
To see the improvement on bounds, we compare the 
output ranges of the above seven approaches on different inputs
$\bar x$ and
different tolerances $\delta$.
Table~\ref{fig:experimets}(a) reports the results on three inputs $\bar x$ 
(No.767, No.1955 and No.2090 in the MNIST training dataset) 
and six tolerances $\delta \in \{0.1,0.2,0.3,0.4,0.5,0.6\}$. 
In all our experiments, we set TIMEOUT as one hour for each FNN
and eight hours for each CNN for 
a single run with an input, and a tolerance $\delta$. 
In the table, TZono and EZono are shorts for T-Zonotope and E-Zonotope.

For each running we get a gap with an upper and lower bound for each neuron. 
Here we define the \emph{the bound proportion} to statistically describe how precise the range an approach gives. 
Basically given an approach (like Symb with Box domain), the  bound proportion of this approach is the average of 
the ratio of the gap length of the neurons on the output layer and that obtained using \air\ with Box. 
Naturally \air\  with Box always has the bound proportion $1$, and the smaller the  bound proportion is, 
the more precise the ranges the approach gives are.

In Table \ref{fig:experimets}(a), every entry is the average bound proportion 
over three different inputs and six different tolerances.
 OOM stands for out-of-memory, 1h TIMEOUT for the one-hour timeout, and 8h TIMEOUT for the eight-hour timeout. 
 We can see that, in general, 
Symb with Box, T-Zonotope and E-zonotope
can achieve much better bounds than \air\  with Box, T-Zonotope and E-zonotope do.
These bounds are closer to what Planet gives, except for FNN5 and FNN6.
E-zonotope is slightly more precise than T-Zonotope.
On the  other hand, while Symb can 
return in a reasonable time in most cases, 
Planet cannot terminate in an hour (resp. eight hours) for FNN7  (resp. CNN1 and CNN2), 
which have $1,800$, $12,412$ and $89,572$ hidden neurons, respectively. 
Also we can see that results in Theorem~\ref{thm:mainprecision} are illustrated here. 
More specifically, 
(1) Symb with Box domain is more precise than \air\  with T-Zonotope and E-Zonotope on FNNs;
(2) Symb with Box, T-Zonotope and E-Zonotope are with the same precision on FNNs;
(3) Symb with T-Zonotope and E-Zonotope are more precise than Symb with Box on
CNNs.

\begin{figure}[!t]
        \begin{subfigure}[b]{0.5\textwidth}
                \centering
                \includegraphics[width=.85\linewidth]{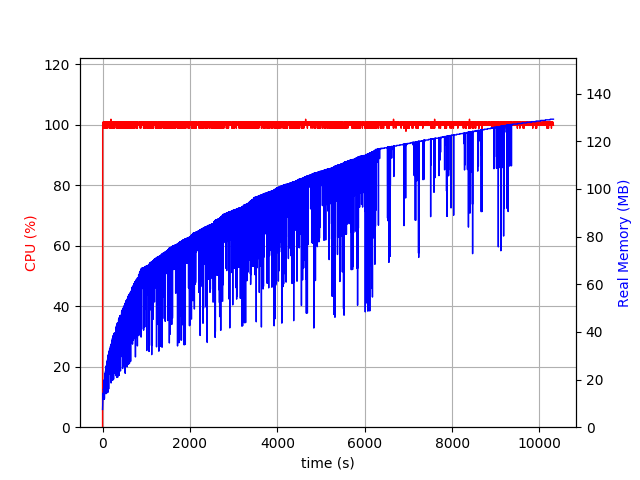}
                \caption{Box}
        \end{subfigure}%
        \begin{subfigure}[b]{0.5\textwidth}
                \centering
                \includegraphics[width=.85\linewidth]{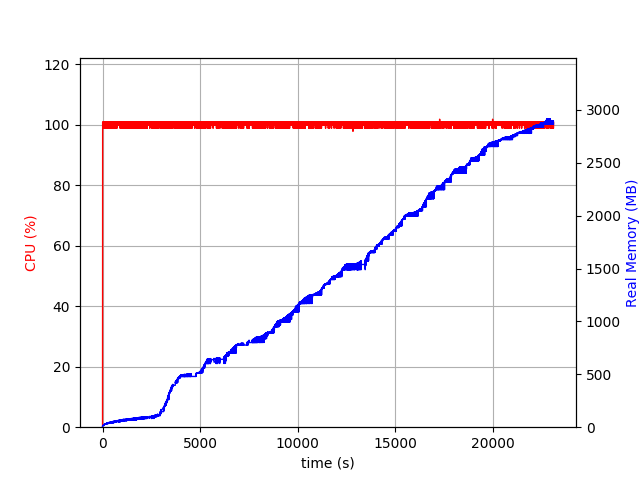}
                \caption{SymBox}
        \end{subfigure}%

        \begin{subfigure}[b]{0.5\textwidth}
                \centering
                \includegraphics[width=.85\linewidth]{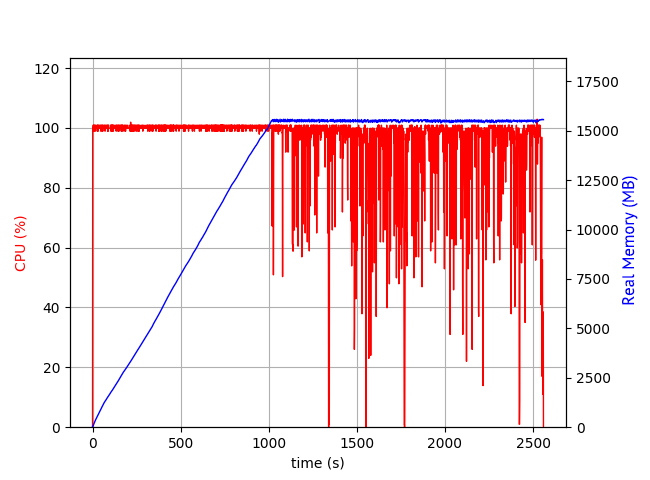}
                \caption{TZono}
        \end{subfigure}%
        \begin{subfigure}[b]{0.5\textwidth}
                \centering
                \includegraphics[width=.85\linewidth]{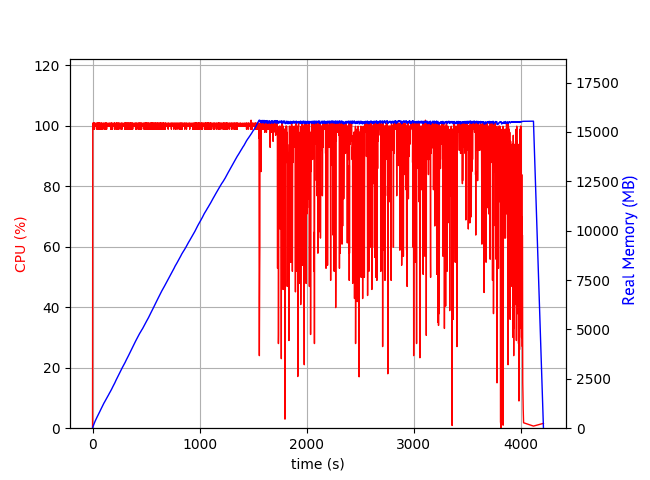}
                \caption{SymTZono}
        \end{subfigure}

        \begin{subfigure}[b]{0.5\textwidth}
                \centering
                \includegraphics[width=.85\linewidth]{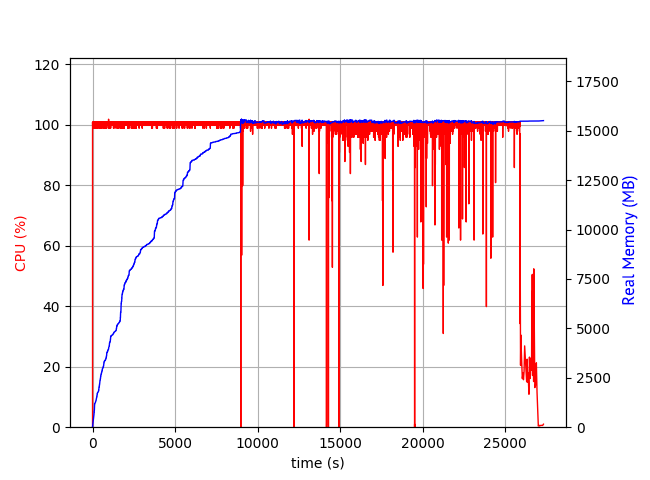}
                \caption{EZono}
        \end{subfigure}%
        \begin{subfigure}[b]{0.5\textwidth}
                \centering
                \includegraphics[width=.85\linewidth]{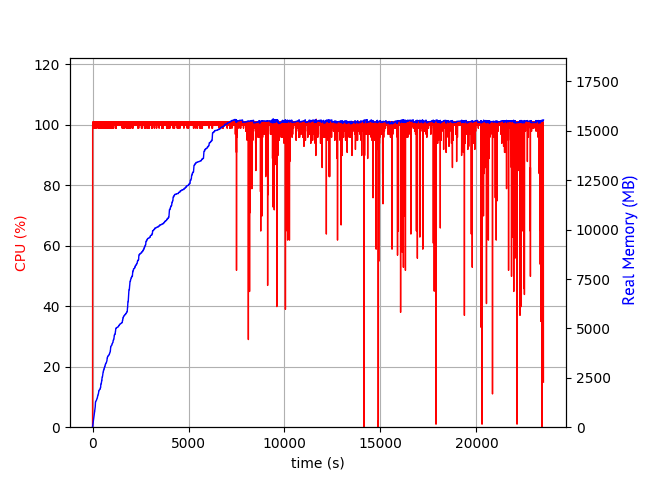}
                \caption{SymEZono}
        \end{subfigure}
        \caption{CPU and memory usage}
        \label{fig:cpumem}
\end{figure}

According to memory footprint, both \air\ and Symb with T-Zonotope or
E-Zonotope need more memory 
than them with Box do, and will crash on large networks, such as CNN2, because of running out of memory.
Figure~\ref{fig:cpumem} shows how CPU and resident memory usage change over time.
The horizontal axis in the figure is the time, in seconds, the vertical axis 
corresponding to the red line is the CPU usage percentage, and the vertical 
axis corresponding to the blue line is the memory usage, in MB.


\paragraph{Greater Verifiable Robustness Bounds}

Table \ref{fig:experimets}(b) shows the results of using the obtained bounds to help verify the robustness property. 
We consider a few thresholds for robustness tolerance, i.e., $\{0.1,0.2,0.3,0.4,0.5,0.6\}$, and find that Symb can verify many more 
cases than \air\ do with comparable time consumption (less than 2x in most cases, and sometimes even faster).

\paragraph{Proportion of Activated/Deactivated ReLU Nodes}

Table \ref{fig:experimets}(c) reports the number of hidden neurons whose ReLU behaviour (i.e., activated or deactivated) has been consistent within the tolerance $\delta$. 
Compared to \air, our Symb can decide the ReLU behaviour with a much higher percentage.  
%
\begin{table}[h!]
    \centering
    \begin{tabular}{c|cc|ccc|c}
        \toprule
        \multirow{2}{*}{$\delta$} & \multicolumn{2}{c|}{\air} & \multicolumn{3}{|c|}{Symb} & \multirow{2}{*}{Planet}\\
        \cmidrule(r){2-3} \cmidrule(l){4-6}  
        {}  & TZono & EZono & Box & TZono & EZono &  \\
        \hline
         0.1  & \quad 7.13046\%   & 7.08137\%  & \quad 6.15622\%   & \quad 6.15622\%  & 6.15622\% & \quad 5.84974\%  \\
         0.2  & \quad 11.09230\%  & 10.88775\% & \quad 6.92011\%   & \quad 6.92011\%  & 6.92011\% & \quad 6.11095\%  \\
         0.3  & \quad 18.75853\%  & 18.32059\% & \quad 8.21241\%   & \quad 8.21241\%  & 8.21241\% & \quad 6.50692\%  \\
         0.4  & \quad 30.11872\%  & 29.27580\% & \quad 10.31225\%  & \quad 10.31225\% & 10.31225\% & \quad 7.04413\%  \\
         0.5  & \quad 45.13963\%  & 44.25026\% & \quad 14.49276\%  & \quad 14.49276\% & 14.49276\% & \quad 7.96402\%  \\
         0.6  & \quad 55.67772\%  & 54.88288\% & \quad 20.03251\%  & \quad 20.03251\% & 20.03251\% & \quad 9.02688\%  \\
        \bottomrule
    \end{tabular}
    \vspace{3em}
    \caption{
        Bound proportions (smaller is better) for 1000 randomly sampled pictures from MNIST testing set on FNN1 with $\delta \in \{0.1,0.2,0.3,0.4,0.5,0.6\}$.
    }
    \label{tab:1000pics}
\end{table} 

We remark that, although the experimental results presented above are based 
on $3$ fixed inputs, more extensive experiments have already been conducted to 
confirm that the conclusions are general. We randomly sample 1000 pictures 
(100 pictures per label) from the MNIST dataset, and compute the bound proportion 
for each of the pair $(m,\delta)$ 
where $m$ refers to the seven approaches in Table \ref{fig:experimets}
and $\delta \in \{0.1, 0.2, 0.3, 0.4, 0.5, 0.6\}$ on FNN1. 
Each entry corresponding to $(m,\delta)$ in Table~(\ref{tab:1000pics}) is the average of bound proportions of approach $m$ over 1000 pictures 
and fixed tolerance $\delta$.
Then we get the average of the bound proportion of 
\air\ with TZono/EZono,
Symb with Box/TZono/EZono,
 and Planet over six different tolerances are
$27.98623\%$,$ 27.44977\%$,$11.02104\%$, $11.02104\%$, $11.02104\%$, $7.08377\%$, respectively, which are very 
close to the first row of Table~\ref{fig:experimets}(a).

\paragraph{Comparison with the bounded powerset domain}

In AI\(^2\)~\cite{AI2}, the bounded powerset domains are used
to improve the precision. In \air , we also implemented such bounded powerset domains instantiated
by Box, T-Zonotope and E-Zonotope  domains, with 32 as the 
bound of the number of the abstract elements in a disjunction. 
The comparison of the performance on the powerset domains
with our symbolic propagation technique (with underlying domains rather than powerset domains) is shown in Table~\ref{tab:3picboundrate}.
We can see that our technique is much more precise than the powerset domains.
The time and memory consumptions of the powerset domains are both around 32 times as much as the underlying domains, which are more than those of our technique.
\begin{table}
    \centering
    \scalebox{0.95}{
        \begin{tabular}{c|ccc|ccc|c}
        \toprule
         & \multicolumn{3}{c|}{\air} & \multicolumn{3}{|c|}{Symb} & \multirow{2}{*}{Planet}\\
        \cmidrule(r){2-4} \cmidrule(l){5-7}  
        {}  & Box32 & TZono32 & EZono32 & Box & TZono & EZono &  \\
        \midrule
        FNN1 &89.65790\% &  20.68675\% &  15.87726\%       &   9.69327\% &    9.69327\% & 9.69327\% &    7.05553\% \\
        FNN2 &89.42070\% &  16.27651\% &   8.18317\%       &   1.76704\% &    1.76704\% & 1.76704\% &    0.89089\% \\
        FNN3 &89.43396\% &  21.98109\% &  12.42840\%       &   6.88656\% &    6.88656\% & 6.88656\% &    4.51223\% \\
        FNN4 &89.44806\% &  25.97855\% &  13.05969\%       &   5.13645\% &    5.13645\% & 5.13645\% &    2.71537\% \\
        FNN5 &89.16034\% &  29.61022\% &  17.88676\%       &   3.34578\% &    3.34578\% & 3.34578\% &    0.14836\% \\
        FNN6 &89.30790\% &   OOM       &  22.60030\%       &   7.12480\% &    7.12480\% & 7.12480\% &    1.94230\% \\
        FNN7 &88.62267\% &   OOM       &  \quad 1h TIMEOUT &   5.52267\% &    5.52267\% & 5.52267\% &    \quad 1h TIMEOUT  \\
        \bottomrule
        \end{tabular}
    }
    \vspace{1em}
    \caption{
        Bound proportions (smaller is better) of different abstract interpretation approaches with the robustness bound $\delta \in \{0.1, 0.2, 0.3, 0.4,0.5, 0.6\}$, and the fixed
        input pictures 767, 1955, and 2090. 
            Note that each entry gives the average bound proportion over six different tolerance and three pictures.
    }
    \label{tab:3picboundrate}
\end{table} 
\paragraph{Faster Verification}
\begin{table}[h]
\centering
\scalebox{0.82}{
\begin{tabular}[htp]{clc|clrclrclrclrclrc|cr}
  \toprule
  &&&&
  \multicolumn{2}{c}{$\delta = 0.1$} 
  &&  
  \multicolumn{2}{c}{$\delta = 0.075$} 
  &&  
  \multicolumn{2}{c}{$\delta = 0.05$} 
  &&  
  \multicolumn{2}{c}{$\delta = 0.025$} 
  &&  
  \multicolumn{2}{c}{$\delta = 0.01$} 
  &&& 
  \multicolumn{1}{c}{Total}
  \\
  \cline{5-6} \cline{8-9} \cline{11-12} \cline{14-15} \cline{17-18}
  &&&&   
      Result & Time
  &&  
      Result & Time
  &&  
      Result & Time
  &&  
      Result & Time
  &&  
      Result & Time
  &&&  
  \multicolumn{1}{c}{Time}
  \\
  \midrule
  \multirow{2}{*}{Point 1}
  &
  Reluplex &&& 
  \sat{} & 39 
  && 
  \sat{} & 123
  && 
  \sat{} & 14
  && \unsat{} & 638
  && \unsat{} & 64
  &&& 879
  \\
  & 
  Reluplex + ABS &&& 
  \sat{} & 45 
  && 
  \sat{} & 36
  && 
  \sat{} & 14
  && \unsat{} & 237
  && \unsat{} & 36
  &&& 368
  \\
  \midrule  
  \multirow{2}{*}{Point 2} &
  Reluplex &&& 
  \unsat{} & 6513
  && 
  \unsat{} & 1559
  && 
  \unsat{} & 319
  && \unsat{} & 49
  && \unsat{} & 11
  &&& 8451
  \\
  &
  Reluplex + ABS&&& 
  \unsat{} & 141
  && 
  \unsat{} & 156
  && 
  \unsat{} & 75
  && \unsat{} & 40
  && \unsat{} & 0
  &&& 412
  \\
  \midrule
  \multirow{2}{*}{Point 3}
  &
  Reluplex &&& 
  \unsat{} & 1013
  && 
  \unsat{} & 422
  && 
  \unsat{} & 95
  && \unsat{} & 79
  && \unsat{} & 6
  &&& 1615
  \\
  &
  Reluplex + ABS&&& 
  \unsat{} & 44
  && 
  \unsat{} & 71
  && 
  \unsat{} & 0
  && \unsat{} & 0
  && \unsat{} & 0
  &&& 115
  \\
  \midrule
  \multirow{2}{*}{Point 4}
  &
  Reluplex &&& 
  \sat{} & 3   
  && 
  \sat{} & 5
  && 
  \sat{} & 1236 
  && \unsat{} & 579
  && \unsat{} & 8 
  &&& 1831
  \\
  &
  Reluplex + ABS&&& 
  \sat{} & 3   
  && 
  \sat{} & 7
  && 
  \unsat{} & 442 
  && \unsat{} & 31
  && \unsat{} & 0 
  &&& 483
  \\
  \midrule
  \multirow{2}{*}{Point 5}&
  Reluplex &&& 
  \unsat{} & 14301   
  && 
  \unsat{} & 4248
  && 
  \unsat{} & 1392
  && \unsat{} & 269
  && \unsat{} & 6
  &&& 20216
  \\
  &
  Reluplex + ABS &&& 
  \unsat{} & 2002   
  && 
  \unsat{} & 1402
  && 
  \unsat{} & 231
  && \unsat{} & 63
  && \unsat{} & 0
  &&& 3698
  \\
  \bottomrule
\end{tabular}%
}
\vspace{3mm}
\caption{The satisfiability on given $\delta$, and the time (in second) with and without bounds generated by abstract interpretation with symbolic propagation on the Box domain.}
\label{table:experiment4}
\end{table}%
In this part we use the networks of ACAS Xu. To evaluate the benefits of tighter bounds for SMT-based tools, we give the bounds obtained by abstract interpretation (on Box domain with symbolic propagation)  to Reluplex~\cite{DBLP:conf/cav/KatzBDJK17} and observe the performance difference.
The results are shown in Table~\ref{table:experiment4}. Each cell shows the 
satisfiability (i.e., SAT if an adversarial example is found) and the running time without
or with given bounds. The experiments are conducted on different  
$\delta$ values
(as in~\cite{DBLP:conf/cav/KatzBDJK17}) 
and a fixed network (nnet1\_1~\cite{DBLP:conf/cav/KatzBDJK17})
 and 5 fixed points (Point 1 to 5 in \cite{DBLP:conf/cav/KatzBDJK17}). 
 The running time our technique spends on deriving the bounds are all less than $1$ second.
 Table~\ref{table:experiment4} shows that tighter initial
 bounds bring significant benefits to Reluplex with an overall 
 $({\frac{1}{5076}-\frac{1}{32992})}/{\frac{1}{32992}} = 549.43\%$
 speedup (9.16 hours compared to 1.41 hours).
 However, it should be noted that, on one specific case (i.e., \(\delta=0.1\) at Point 1 and \(\delta=0.075\) at point 4), 
 the tighter initial bounds slow Reluplex, which means that the speedup is
 not guaranteed on all cases.
 For the case \(\delta=0.05\) at point 4, Reluplex gives SAT and Reluplex+ABS gives UNSAT. This may result from a floating point arithmetic error.


\section{Related Work}

Verification of neural networks can be traced back to  \cite{DBLP:conf/cav/PulinaT10}, where the network is encoded after approximating every sigmoid activation function with a set of piecewise linear constraints and then solved with an SMT solver. It works with a network of 6 hidden nodes.
More recently, by considering DNNs with ReLU activation functions, the verification approaches include constraint-solving \cite{DBLP:conf/cav/KatzBDJK17,LM2017,DBLP:conf/atva/Ehlers17,NKPSW2017}, layer-by-layer exhaustive search \cite{DBLP:conf/cav/HuangKWW17}, global optimisation \cite{DBLP:conf/ijcai/RuanHK18,SherLock,RWSHKK2019}, abstract interpretation \cite{AI2,deepz,deeppoly}, functional approximation \cite{XTJ2018}, and reduction to two-player game \cite{WHK2017,WWRHK2019}, etc.
%
More specifically, \cite{DBLP:conf/cav/KatzBDJK17} presents an SMT solver Reluplex to verify
properties
on DNNs with fully-connected layers.
\cite{DBLP:conf/atva/Ehlers17}
presents another SMT solver Planet
which combines linear approximation and interval arithmetic to work with fully connected and max pooling layers.
Methods based on SMT solvers 
do not scale well,
e.g., Reluplex can only work with DNNs with a few hidden neurons.

The above works are mainly for the verification of local robustness.
Research has been conducted to compute other properties, e.g., 
the output reachability. An exact computation of output reachability can be utilised to verify local robustness. 
In \cite{DBLP:conf/nfm/DuttaJST18}, Sherlock, an algorithm based on local and global search and mixed integer linear programming (MILP), is put forward to calculate the output range of a given label when the inputs are restricted to a small subspace.
\cite{DBLP:conf/ijcai/RuanHK18} presents another algorithm for output range analysis, and their algorithm is workable for all Lipschitz continuous DNNs, including all layers and activation functions mentioned above.
In \cite{DBLP:journals/corr/abs-1804-10829}, the authors use symbolic interval propagation to calculate output range. Compared with \cite{DBLP:journals/corr/abs-1804-10829}, our approach  fits for general abstract domains, while their symbolic interval propagation  is designed specifically for symbolic intervals. 
 
AI\(^2\)\cite{AI2} is the first to use
abstract interpretation to verify DNNs.
They define a class of functions called conditional affine transformations (CAT) to characterize DNNs containing fully connected, convolutional and max pooling layers with the ReLU activation function.
They use Interval and Zonotope as the abstract domains and the powerset technique on Zonotope.
Compared with AI${}^2$, we use symbolic propagation rather than powerset extension techniques to enhance the precision of abstract interpretation based DNN verification. Symbolic propagation is more lightweight than powerset extension. Moreover, we also use the bounds information given by  abstract interpretation to accelerate SMT based DNN verification. 
DeepZ~\cite{deepz} and DeepPoly~\cite{deeppoly} propose two
specific abstract domains tailored to DNN verification, in order to improve the precision of
abstract interpretation on the verification on DNNs. In contrast, our
work is a general approach that can be applied on various domains.

\section{Conclusion}

In this paper, we have explored more potential of  abstract interpretation
on the verification over DNNs. We have proposed to use symbolic propagation on 
abstract interpretation to take advantage of the linearity in most part of the DNNs,
which achieved significant improvements in terms of the precision and memory usage. 
This is based on a key observation that, for local robustness verification of DNNs 
where a small region of the input space is concerned, a considerable percentage of hidden neurons 
remain active or inactive for all possible inputs in the region. 
For these neurons, their ReLU activation function can be replaced by a linear function. 
Our symbolic propagation iteratively computes for each neuron this information and utilize 
the computed information to improve the performance. 

This paper has presented with formal proofs three somewhat surprising 
theoretical results, which are then affirmed by  our experiments. 
These results have enhanced our theoretical and practical understanding 
about the abstract interpretation based DNN verification and symbolic propagation. 

This paper has also applied the tighter bounds of variables on hidden neurons from our approach to 
improve the performance of the state-of-the-art SMT based DNN verification tools, 
like Reluplex. The speed-up rate is up to 549\% in our experiments. 
We believe this result sheds some light on the potential in improving the scalability 
of SMT-based DNN verification: In addition to improving the performance through
enhancing the SMT solver for DNNs, an arguably easier way is to take an abstract interpretation technique 
(or other techniques that can refine the constraints) as a pre-processing.

\section*{Acknowledgements}
This work is supported by the Guangdong Science and Technology Department (no. 2018B010107004) and the NSFC Program (No. 61872445).
We also thank anonymous reviewers for detailed comments.

\bibliographystyle{splncs04}
\bibliography{main}

\end{document}